\theoremstyle{plain}
\newtheorem{theorem}{Theorem}[section]
\newtheorem{lemma}[theorem]{Lemma}
\newtheorem{definition}[theorem]{Definition}
\DeclareMathOperator*{\scope}{scope}
\DeclareMathOperator*{\child}{Ch}
\DeclareMathOperator*{\poly}{poly}
\DeclareMathOperator*{\indeg}{InDeg}
\DeclareMathOperator*{\dist}{dist}
\title{On the Expressive Power of Tree-Structured Probabilistic Circuits}
\author{%
  Lang Yin \\%\thanks{Use footnote for providing further information
    %about author (webpage, alternative address)---\emph{not} for acknowledging
    %funding agencies.} \\
  Department of Computer Science\\
  University of Illinois Urbana-Champaign\\
  %Pittsburgh, PA 15213 \\
  \texttt{langyin2@illinois.edu} \\
  % examples of more authors
  \And
  Han Zhao \\
  Department of Computer Science\\
  University of Illinois Urbana-Champaign\\
  % Address \\
  \texttt{hanzhao@illinois.edu} \\
  % \AND
  % Coauthor \\
  % Affiliation \\
  % Address \\
  % \texttt{email} \\
  % \And
  % Coauthor \\
  % Affiliation \\
  % Address \\
  % \texttt{email} \\
  % \And
  % Coauthor \\
  % Affiliation \\
  % Address \\
  % \texttt{email} \\
}

\begin{document}

\maketitle

\begin{abstract}
    Probabilistic circuits (PCs) have emerged as a powerful framework to compactly represent probability distributions for efficient and exact probabilistic inference. It has been shown that PCs with a general directed acyclic graph (DAG) structure can be understood as a mixture of exponentially (in its height) many components, each of which is a product distribution over univariate marginals. However, existing structure learning algorithms for PCs often generate tree-structured circuits or use tree-structured circuits as intermediate steps to compress them into DAG-structured circuits. This leads to the intriguing question of whether there exists an exponential gap between DAGs and trees for the PC structure. In this paper, we provide a negative answer to this conjecture by proving that, for $n$ variables, there exists a quasi-polynomial upper bound $n^{O(\log n)}$ on the size of an equivalent tree computing the same probability distribution. On the other hand, we also show that given a depth restriction on the tree, there is a super-polynomial separation between tree and DAG-structured PCs. Our work takes an important step towards understanding the expressive power of tree-structured PCs, and our techniques may be of independent interest in the study of structure learning algorithms for PCs.
\end{abstract}

\section{Introduction}
Probabilistic circuits (PCs)~\citep{choi2020probabilistic,sanchez2021sum}, also commonly known as sum product networks (SPNs)~\citep{poon2011}, are a type of deep graphical model that allow exact probabilistic inference efficiently in linear time with respect to the size of the circuit. Like other deep models, the parameters of a PC can be learned from data samples~\citep{zhao2016unified}. Because of these desirable properties, they have been increasingly applied in various contexts, including generative modeling~\citep{zhang2021probabilistic}, image processing~\citep{wang2016hierarchical,amer2015sum}, robotics~\citep{sguerra2016image}, planning~\citep{pronobis2017deep} and sequential data including both textual and audio signals~\citep{peharz2014modeling,cheng2014language}. Compared with deep neural networks, the sum and product nodes in PCs admit clear probabilistic interpretation~\citep{zhao2015relationship} of marginalization and context-specific statistical independence~\citep{boutilier2013context}, which opens the venue of designing efficient parameter learning algorithms for PCs, including the expectation-maximization (EM) algorithm~\citep{gens2012discriminative}, the convex-concave procedure (CCCP)~\citep{zhao2016unified}, and the variational EM algorithm~\citep{zhao2016collapsed}.

Perhaps one of the most important properties of PCs is that they can be understood as a mixture of exponentially (in its height) many components, each of which is a product distribution over univariate marginals~\citep{zhao2016unified}. Intuitively, each sum node in PC can be viewed as a hidden variable that encodes a mixture model~\citep{zhao2015relationship} and thus a hierarchy of sum nodes corresponds to an exponential number of components. This probabilistic interpretation of PCs has led to a number of interesting structure learning algorithms~\citep{gens2013learning,DennisVentura2012,pehraz2013,RooshenasLowd2014,Adel2015LearningTS,lee2013}. However, almost all of the existing structure learning algorithms for PCs output tree-structured circuits, or use tree-structured circuits as intermediates to compress them into DAG-structured circuits. Because of the restricted structure of trees, such algorithms do not fully exploit the expressive power of PCs with general DAG structures, and often output tree-structured PCs with exceedingly large sizes~\citep{gens2013learning}. Yet, from a theoretical perspective, it remains open whether there truly exists an exponential gap between DAGs and trees for the PC structure. Being able to answer this question is important for understanding the expressive power of tree-structured PCs, and may also lead to new insights in structure learning algorithms for PCs.

% Later, more methods have been developed to learn tree-structured PCs (\cite{DennisVentura2012, pehraz2013, gens2013, RooshenasLowd2014, Adel2015LearningTS, lee2013, melibari2015}), and compress trees to directed acyclic graphs (\cite{tahrima2016}). Although tree-structured PCs are more straightforward to be learned, in many cases such algorithms output trees of exponential sizes. To avoid such inefficiency, compressing those prohibitively large trees into more compact directed acyclic graphs (DAGs) is a natural option. A DAG-structured circuit allows gates to have multiple parents, and therefore allows the computation made by gates to be used repetitively. Nevertheless, tree-structured circuits are still preferred due to its clarity and simplicity on operations. The motivation to obtain trees and its prohibitive size raise the question: Is there truly an exponential separation between DAG and tree-structured PCs? In other words, is there a probability distribution or even a class of probability distributions, which can be efficiently computed with a DAG-structured PC but can only be represented with an exponentially-sized tree?

\subsection{Our Contributions}
In this work we attempt to answer the question above by leveraging recent results in complexity theory~\citep{Valiant1983FastPC,raz2008balancing,fournier2023}. Our contributions are two-folds: an upper and lower bound for the gap between tree and DAG-structured PCs. In what follows we will first briefly state our main results and then introduce the necessary concepts and tools to tackle this problem.

\paragraph{An Upper Bound}
% \vspace*{-1em}
In~\Cref{sec:upperbound}, inspired by earlier works in~\citet{Valiant1983FastPC} and~\citet{raz2008balancing}, we show that, for a network polynomial that can be computed efficiently with a DAG-structured PC, there always exists a quasi-polynomially-sized tree-structured PC to represent it. An informal version of our main result for this part is stated below.

\begin{theorem} [Informal]
    Given a network polynomial of $n$ variables, if this polynomial can be computed efficiently by a PC of size $\poly(n)$, then there exists an equivalent tree-structured PC of depth $O(\log n)$ and of size $n^{O(\log n)}$ that computes the same network polynomial.
\end{theorem}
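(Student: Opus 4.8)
The plan is to reduce the statement about probabilistic circuits to a known depth-reduction result for arithmetic circuits. The key observation is that a PC computing a network polynomial is a special kind of arithmetic circuit with sum and product gates. So I would invoke the classical depth-reduction theorems of Valiant et al. and Raz, which say that a polynomial-size arithmetic circuit computing a polynomial of degree $d$ can be rebalanced into a circuit of depth $O(\log n \log d)$ (or $O(\log n)$ when the degree is polynomially bounded) with only a quasi-polynomial blowup in size. Since a network polynomial over $n$ variables has degree bounded by $n$ (each variable's marginal indicators appear with bounded degree), the depth bound becomes $O(\log n)$.

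First I would make precise the notion that a PC of size $\poly(n)$ is an arithmetic circuit, and verify that the structural properties required by PCs—decomposability and smoothness—are preserved, or can be restored, under the rebalancing transformation. This is the main obstacle I anticipate: the generic depth-reduction results from arithmetic circuit complexity produce an arithmetic circuit computing the same polynomial, but they give no guarantee that the output respects the semantic constraints (decomposability, smoothness, the product-of-marginals structure at product nodes) that make a circuit a \emph{probabilistic} circuit. I would therefore need to argue that the rebalanced circuit can be massaged back into a valid PC without destroying the depth and size bounds, perhaps by exploiting the fact that the scope structure of the original PC constrains which subpolynomials appear at each gate.

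\begin{proof}[Proof sketch]
The plan is as follows. First, view the given $\poly(n)$-size PC as an arithmetic circuit computing the network polynomial, noting that its degree is bounded by $O(n)$ because of the bounded-degree appearance of the indicator variables. Next, apply the depth-reduction theorem for arithmetic circuits, which converts any polynomial-size circuit computing a degree-$d$ polynomial into one of depth $O(\log n)$ and size $n^{O(\log n)}$. Then, unfold the resulting DAG into a tree: since the depth is $O(\log n)$ and each gate has bounded fan-in, the total number of root-to-leaf paths, and hence the size of the unfolded tree, remains $n^{O(\log n)}$. Finally, restore the PC semantics by checking that smoothness and decomposability can be re-imposed on the tree using the scope information inherited from the original circuit, at no additional asymptotic cost. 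The crux of the argument is this last step, ensuring the rebalanced and unfolded arithmetic circuit is genuinely a valid tree-structured PC computing the same distribution.
\end{proof}
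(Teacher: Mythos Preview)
Your high-level plan matches the paper's: invoke a Raz-style depth reduction to get a $\poly(n)$-size DAG of depth $O(\log n)$, then duplicate into a tree. You also correctly identify the crux: the generic arithmetic-circuit depth reduction does not output a PC, and something must be done about decomposability and smoothness.

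The gap is that you treat depth reduction as a black box and defer the PC-validity issue to a vague final ``restore semantics'' step. This does not work as stated: once a black-box transformation has produced an arbitrary arithmetic circuit computing the polynomial, there is no reason the gates respect any scope partition, and there is no general procedure to recover decomposability post hoc without blowing up size or depth. The paper does not apply depth reduction and then fix the output; it \emph{reruns the Raz construction from the inside} and verifies validity at every iteration. Two lemmas do the real work. First, Lemma~\ref{variable}: for nodes $v,w$ in a PC, the partial derivative $\partial_w f_v$ is homogeneous over exactly the variable set $X_v\setminus X_w$. This is what guarantees that each new product gate created by the Raz expansions (e.g.\ the gate computing $f_{t_1}\cdot f_{t_2}\cdot\partial_t f_v$ in Lemma~\ref{rep-poly}, or $f_{t_2}\cdot\partial_w f_{t_1}\cdot\partial_t f_u$ in Lemma~\ref{rep-pd}) has children with pairwise disjoint scopes, so decomposability survives. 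Second, Lemma~\ref{smooth-homo-equivalence}: for a decomposable circuit on $n$ variables computing a degree-$n$ polynomial, smoothness is equivalent to homogeneity. Since the Raz expansions only combine homogeneous pieces of matching degree, smoothness comes for free once decomposability is checked. Without these two observations your ``massaging'' step has no content; they are precisely the missing idea in your sketch.

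A minor remark: the classical VSBR/Raz bound for fan-in-two circuits gives depth $O(\log^2 n)$, not $O(\log n)$. The paper gets $O(\log n)$ by dropping the fan-in restriction (the new sum gates have fan-in $|\mathbf{G}_m|$, which may be polynomial), so your claim that ``each gate has bounded fan-in'' in the unfolded tree is not quite right. The size bound $n^{O(\log n)}$ still follows, but from the fact that the intermediate DAG $\Psi$ has $\poly(n)$ total size, not from bounded fan-in.
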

We prove this result by adapting the proof in~\citet{raz2008balancing}. Our construction involves two phases: Phase one applies the notion of \textit{partial derivatives} for general arithmetic circuits to represent intermediate network polynomials alternatively, and construct another DAG-structured PC using those alternative representations; we will provide fine-grained analysis on the new DAG, such that its depth is $O(\log n)$ and its size is still $\poly(n)$. Phase two applies the standard duplicating strategy for all nodes with more than one parent to convert the new DAG into a tree. This strategy will lead to an exponential blowup for an arbitrary DAG with depth $D$, since the size of the constructed tree will be $n^{O(D)}$. However, note that the DAG constructed in the first phase has depth $O(\log n)$. Combining it with the duplicating strategy, we will be able to construct an equivalent tree-structured PC of size upper bounded by $n^{O(\log n)}$, as desired.

The original algorithm in~\citet{raz2008balancing} only reduces the depth to $O(\log^2 n)$ due to their restriction on the graph of using nodes with at most two children. This restriction is not necessary for PCs, and by avoiding it, we show that the depth can be further reduced to $O(\log n)$ with a slight modification of the original proof in~\citet{raz2008balancing}.

\paragraph{A Lower Bound}
% \vspace*{-1em}
In Section 4, we show that under a restriction on the depth of the trees, there exists a network polynomial that can be computed efficiently with a DAG-structured PC, but if a tree-structured PC computes it, then the tree must have a super-polynomial size. The following informal theorem states our main result for this part, which will be formally addressed in~\Cref{sec:lowerbound}.

\begin{theorem} [Informal]
    Given $n$ random variables, there exists a network polynomial on those $n$ variables, such that it can be efficiently computed by a PC of size $O(n \log n)$ and depth $O(\log n)$, but any tree-structured PC with depth $o(\log n)$ computing this polynomial must have size at least $n^{\omega(1)}$.
\end{theorem}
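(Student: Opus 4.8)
The plan is to exhibit an explicit network polynomial $f_n$ over $n$ variables that separates the two models, following the template of depth-hierarchy separations in arithmetic circuit complexity. The polynomial I would use is an iterated, recursively-defined construction — morally an iterated matrix product or a balanced product-of-mixtures — laid out over $\log n$ levels so that a DAG can \emph{reuse} one shared subcircuit at each level. Concretely, first I would define $f_n$ through a recursion $f^{(0)}, f^{(1)}, \dots, f^{(\log n)}$ in which each level combines the previous level's polynomial with fresh variables via a fixed sum-product gadget, and then argue that reusing $f^{(i-1)}$ across the branches of level $i$ yields a decomposable, smooth DAG-structured PC of size $O(n \log n)$ and depth $O(\log n)$. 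This establishes the ``easy for DAGs'' half of the statement; the care here is to make the construction a genuine PC — product nodes over disjoint scopes, sum nodes over a common scope, nonnegative weights — rather than merely an arithmetic circuit.

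Second, for the lower bound I would relate the depth of a tree-structured PC to the product-depth of an arithmetic formula and then invoke an adaptation of the recent super-polynomial lower bounds for low-depth arithmetic formulas of \citet{fournier2023}, together with partial-derivative and evaluation-dimension arguments in the spirit of \citet{raz2008balancing}. The key step is to fix a balanced partition of the $n$ variables into $X$ and $Y$ dictated by the recursive structure of $f_n$, form the partial-derivative (coefficient) matrix $M_{f_n}$ whose rows and columns are indexed by multilinear monomials in $X$ and $Y$, and show this matrix has near-full rank. I would then prove the complementary bound: any tree PC of depth $o(\log n)$ computing $f_n$ induces a polynomial whose partial-derivative matrix has rank bounded by a product of its size and a function of its depth, so that matching the large rank of $M_{f_n}$ forces the tree size to be $n^{\omega(1)}$. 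The depth restriction $o(\log n)$ must be used quantitatively and is essential: a tree of the full balancing depth $O(\log n)$ would, by the upper-bound theorem above, compute $f_n$ in only quasi-polynomial size, so the separation lives precisely in the window between sub-logarithmic and logarithmic depth.

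I expect the main obstacle to be bridging the abstract arithmetic-formula lower bound and the structural constraints of PCs. Unlike general arithmetic circuits, a PC must be decomposable, smooth, and (for probabilistic semantics) use nonnegative weights, so I cannot import an off-the-shelf bound verbatim: I must verify that the rank measure on the partial-derivative matrix behaves correctly under the specific gate types a PC allows, and that decomposability does not accidentally weaken the measure on the DAG side while smoothness still forces a large blowup on the tree side. A secondary difficulty is calibrating the recursion so that all three requirements — DAG size $O(n \log n)$, DAG depth $O(\log n)$, and the super-polynomial tree bound at depth $o(\log n)$ — hold simultaneously for the same polynomial; this forces a careful choice of the per-level gadget and the induced variable partition so that the coefficient-matrix rank grows fast enough to survive the depth-$o(\log n)$ tree-size bound.
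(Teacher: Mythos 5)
Your overall architecture --- pick a recursively defined, iterated-matrix-product-style hard polynomial, give it a small $O(\log n)$-depth PC, and transfer a low-depth arithmetic-formula lower bound to tree PCs --- matches the paper's plan, and you correctly flag the PC-specific structural constraints as the main obstacle. But there are two concrete gaps. First, the ``complementary bound'' you propose for the lower half --- that any formula of depth $o(\log n)$ computes a polynomial whose partial-derivative (coefficient) matrix has rank bounded by its size times a function of its depth --- is false for the plain coefficient-matrix rank: the polynomial $\prod_{i=1}^{n}(x_i + y_i)$ has a full-rank $2^n \times 2^n$ coefficient matrix under the natural $X/Y$ partition, yet is computed by a monotone product-of-sums formula of size $O(n)$ and depth $2$. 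A superpolynomial lower bound for depth-$o(\log d)$ formulas is precisely the content of the recent breakthrough used in \citet{fournier2023}, and it needs a far more delicate measure (relative rank of set-multilinear projections with imbalanced word sizes), not the classical partial-derivative matrix of \citet{raz2008balancing}. The paper does not attempt to reprove any of this; it invokes Theorem~\ref{2023} strictly as a black box, with the hard polynomial $H^{(k,2)}$ taken verbatim from that work, so the only thing left to prove is a reduction.

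Second, you name the PC-versus-formula bridging problem but do not resolve it, and that is where essentially all of the paper's work lies. The polynomial $H^{(k,2)}$ cannot itself be the network polynomial of a smooth, decomposable PC, because its sum gates combine products over different variable subsets. The paper's fix is to pad each product node with the negation indicators of the missing variables, yielding a smooth and decomposable tree $P^*$ of size $O(n\log n)$ and depth $O(\log n)$ (Proposition~\ref{dc}); then, given any depth-$o(k)$ tree PC $\Pi$ computing the padded polynomial $P$, it deletes every negation-indicator leaf and shows (Lemma~\ref{structure-of-Pi}, via homogeneity of monotone formulas, Lemma~\ref{monotone-formula-homogeneous}) that no sum node has a negation leaf as a child and no product node has only negation leaves as children, so the stripped tree $\Pi'$ is a well-defined monotone formula, of no larger size and depth, computing exactly $H^{(k,2)}$ (Proposition~\ref{reduction}). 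The chain $|\Pi| \geq |\Pi'| \geq n^{\omega(1)}$ then closes the argument. Without this padding-and-stripping mechanism (or a substitute for it), your reduction from tree PCs to monotone formulas does not go through, and the calibration worries you raise at the end remain unresolved.
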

Our result is obtained by finding a reduction to~\citet{fournier2023}. We first fix an integer $k$ and a network polynomial of degree $n = 2^{2k}$. To show that the polynomial is not intrinsically difficult to represent, i.e., the minimum circuit representing it shall be efficient, we explicitly construct a PC of depth $O(\log n)$ and size $O(n \log n)$. Next, suppose via a black box, we have a minimum tree-structured PC of depth $o(\log n)$ computing this polynomial. After removing some leaves from that minimum tree but maintaining its depth, we recover a regular arithmetic tree that computes a network polynomial of degree $\sqrt{n} = 2^k$. Moreover, as shown in \cite{fournier2023}, if an arithmetic tree with depth $o(\log n)$ computes this low-degree polynomial, then the size of the tree must be at least $n^{\omega(1)}$. Our operations on the minimum tree PC must reduce its size; therefore, the original tree must have a larger size than $n^{\omega(1)}$, and this fact concludes our proof.

\subsection{More Related Work}
There is an extensive literature on expressive efficiency of network structures for PCs and probabilistic generating circuits (PGCs)~\citep{zhang2021probabilistic}, another probabilistic graphic model. Very recently, it was shown in \citet{broadrick2024} that PCs with negative weights are as expressive as PGCs. The investigation on PCs has started as early as in \citet{DelalleauBengio2011} and later in \citet{Martens2014}. In neural networks and variants, this topic, along with the relationship between expressive efficiency and depth/width, has attracted many interests as well~\citep{telgarsky2016, nguyen2018b, malach2019, kileel2019, sun2016depth, safran2017depth, mhaskar2017and, eldan2016power}. In particular, \citet[Theorem 34]{Martens2014} has shown that there exists a network polynomial with a super-polynomial minimum tree expression, but it is unclear whether the same polynomial can be computed by a polynomial-sized DAG. Our work provides a positive answer to this question. For arbitrary network polynomials, finding a minimum DAG-structured PC is reducible to a special case of the \textit{minimum circuit size problem} for arithmetic circuits, which remains to be a longstanding open problem in circuit complexity.

% Theorem 34 together confirm the separation between a special case of network polynomials without negation indicators, like the polynomials in \cite{raz2008balancing}, but if the set of variables is extended, as the case in part (ii), they only provided a lower bound on tree-structured PCs without confirming a polynomial-sized upper bound for general DAG-structured PCs. 

\section{Preliminaries}
We first introduce the setup of probabilistic circuits and relevant notation used in this paper.
\paragraph{Notation}
\vspace*{-1em}
% We describe here the \textbf{rooted directed acyclic graph (DAG)}, the type of graph used to build PCs. 
A rooted directed acyclic (DAG) graph consists a set of nodes and directed edges. For such an edge $u \rightarrow v$, we say that $u$ is a \textit{parent} of $v$, and $v$ is a \textit{child} of $u$. We use $\child(u)$ to denote the set of children of the node $u$.
% In our setting, there cannot be another edge between $u$ and $v$ with either direction. 
We say there is a directed path from $a$ to $b$ if there is an edge $a \rightarrow b$ or there are nodes $u_1, \cdots, u_k$ and edges $a \rightarrow u_1 \rightarrow \cdots \rightarrow u_k \rightarrow b$; in this case, we say that $a$ is an \textit{ancestor} of $b$ and that $b$ is a \textit{descendant} of $a$. If two vertices $v$ and $w$ are connected via a directed path, we call the number of edges in a shortest path between them as the \textit{distance} between them, denoted by $\dist(v,w)$. A directed graph is \textit{rooted} if one and only one of its nodes has no incoming edges. A \textit{leaf} in a DAG is a node without outgoing edges. A \textit{cycle} in a directed graph is a directed path from a node to itself, and a directed graph without directed cycles is a DAG. For two disjoint sets $A$ and $B$, we will denote their disjoint union by $A \sqcup B$ to emphasize their disjointedness.

Clearly, each directed graph has an underlying undirected graph obtained by removing arrows on all edges. Although by definition, a DAG cannot have a directed cycle, but its underlying undirected graph may have an undirected cycle. If the underlying undirected graph of a DAG is also acyclic, then that DAG is called a \textbf{directed tree}. Every node in a directed tree has at most one parent. If two nodes share a parent, one is said to be a \textit{sibling} of the other.

\paragraph{Complexity Classes} 
In what follows we introduce the necessary complexity classes that will be used throughout the paper. Let $f(n)$ be the runtime of an algorithm with input size $n$. 
% Given a variable $n$, any function $f(n)$ belongs to a complexity class depending on the growth speed of $f(n)$ as $n \rightarrow \infty$. In the following, we present the necessary complexity classes for this paper, roughly in the order from the slowest to the fastest.
\begin{itemize}
    \item A function $f(n)$ is in the \textbf{polynomial} class $\poly(n)$ if $f(n) \in O(n^k)$ for a constant $k \in \mathbb{N}$.
    \item A function $f(n)$ is in the \textbf{super-polynomial} class if $f(n)$ is not asymptotically bounded above by any polynomial. Formally, this requires $f(n) \in \omega(n^c)$ for any constant $c > 0$, i.e. $\lim_{n \rightarrow \infty} \frac{f(n)}{n^c} = \infty$ for any $c > 0$.
    \item A function $f(n)$ is in the \textbf{quasi-polynomial} class if it can be expressed in the form $2^{\poly(\log n)}$.
    \item A function $f(n)$ is in the \textbf{exponential} class if it can be expressed in the form $2^{O(\poly (n))}$.
\end{itemize}

\paragraph{Probabilistic Circuits}
% \vspace*{-0.5em}
A probabilistic circuit (PC) is a probabilistic model based on a rooted DAG.\ Without loss of generality, in our work, we focus on PCs over Boolean random variables. We first introduce the notion of \emph{network polynomial}. For each Boolean variable $X$, we use the corresponding lower case alphabet $x$ to denote the indicator of $X$, which is either 0 or 1; for the same variable, $\Bar{x}$ represents the negation. In many cases, we use $1:N$ to denote the index set $[N]$.
\begin{figure}%{r}{0.5\linewidth}
    \centering
    \includegraphics[scale=0.8]{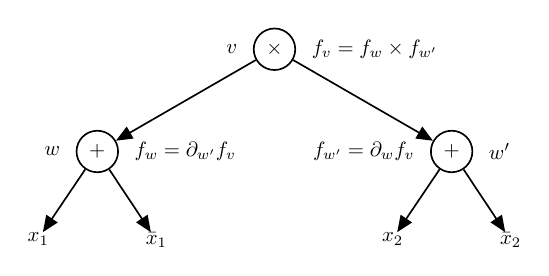}
    \caption{Partial derivatives of sum nodes.}
    \label{fig:partial-derivative}
    % \vspace*{-1em}
\end{figure}
A PC over Boolean variables $\{ X_1, \cdots, X_n \}$ computes a polynomial over the set of indicators $\{ x_1, \cdots x_n, \Bar{x}_1, \cdots, \Bar{x}_n \}$; we will refer this polynomial as the \textit{network polynomial}. In the network, the leaves are indicators of variables, and all other nodes are either sum or product nodes; a node that is not a leaf may also be called an \textit{internal} node. Each internal node computes a polynomial already: a sum node computes a weighted sum of the polynomials computed by its children, and a product node computes the product of the polynomials computed by its children. A PC is said to be \textit{normalized} if the weights of the outgoing edges from a sum node sum to one. It was shown in~\citet{zhao2015relationship} that, every unnormalized PC can be transformed into an equivalent normalized PC within linear time.

To represent valid probability distributions, a PC must satisfy two structural properties: \textit{decomposability} and \textit{smoothness}. To define them, we need to define the \textit{scope} of a node, which is the set of variables whose indicators are descendants of that node. For a node $v$, if the indicator $x$ of the variable $X$ is one of descendants of $v$, then $X\in\scope(v)$; more generally, $\scope(v) = \cup_{v' \in \child(v)} \scope(v')$.
\begin{definition}[Decomposability and Smoothness]
    \vspace*{-1em}
    A PC is \textit{decomposable} if and only if for every product node $v$ and any pair of its children $v_1$ and $v_2$, we have $\scope(v_1) \cap \scope(v_2) = \emptyset$. A PC is \textit{smooth} if and only if for each sum node, all of its children have the same scope.
\end{definition}
In this paper we restrict our attention to PCs that are both decomposable and smooth, since otherwise we can always transform a PC into an equivalent one that is both decomposable and smooth in quadratic time~\citep{zhao2015relationship}. The \emph{degree} of a monomial is the sum of the exponents of all its variables, and the \emph{degree} of a polynomial $f$, denoted by $\deg(f)$ is the highest degree among its constituent monomials. A polynomial is said to be \emph{homogeneous} if all of its monomials have the same degree. A PC is said to be \textit{homogeneous} if all of its sum and product nodes compute a homogeneous polynomial. Later, we will show that decomposability and smoothness imply homogeneity, and vice versa with mild conditions. For a node $v$ in a PC, we use $\deg(v)$ to denote $\deg(f_v)$. As emphasized earlier, this paper investigates the quantitative relationship between a DAG and a tree, which are both PCs and represent the same probability distribution. To make the terminology uncluttered, we will call the former a \emph{DAG-structured PC}, and the latter a \emph{tree PC}. If a tree PC computes the same network polynomial as a DAG-structured PC, then the tree PC is said to be an \textit{equivalent tree PC} with respect to that DAG-structured PC.

Unless specified otherwise, we will use $\Phi$ to denote the entire PC in consideration and $f$ the network polynomial computed by the root. For each node $v$, the sub-network rooted at $v$ is denoted by $\Phi_v$ and the polynomial computed by $v$ is $f_v$. The set of variables in $f_v$ is $X_v$, which is a subset of $\{ X_1, \cdots, X_n \}$. The size of the network $\Phi$, denoted by $| \Phi |$, is the number of nodes and edges in the network. The depth of $\Phi$, denoted by $D(\Phi)$, is its maximum length of a directed path.

\paragraph{Partial Derivatives}
\vspace*{-0.5em}
In the process of proving the upper bound, a key notion named \textit{partial derivative} is frequently used and formally defined below.
\begin{definition}[Partial Derivative]
    For two nodes $v$ and $w$ in a network $\Phi$, the \textit{partial derivative} of the polynomial $f_v$ with respect to the node $w$, denoted by $\partial_w f_v$, is constructed by the following steps:
    \begin{enumerate}[noitemsep, topsep=0pt, parsep=0pt]
        \item Substitute the polynomial $f_w$ by a new variable $y$.
        \item Compute the polynomial computed by $v$ with the variable $y$; denote the new polynomial by $\Bar{f}_v$. Due to decomposability, $\Bar{f}_v$ is linear in $y$.
        \item Define the partial derivative $\partial_w f_v = \frac{\partial \Bar{f}_v}{\partial y}$.
    \end{enumerate}
\end{definition}
Observe that the chain rule in calculus also holds for our notion here, and therefore leads to the two following facts.
\begin{itemize}%[noitemsep, topsep=0pt, parsep=0pt]
    \item Let $v$ be a sum node with children $v_1$ and $v_2$, and the edges have weight $a_1$ and $a_2$, respectively, then by definition $f_v = a_1 f_{v_1} + a_2 f_{v_2}$. For any other node $w$, the partial derivative is $\partial_w f_v = a_1 \cdot \partial_w f_{v_1} + a_2 \cdot \partial_w f_{v_2}$.
    \item Similarly, let $v$ be a product node with children $v_1$ and $v_2$, then $f_v = f_{v_1} \cdot f_{v_2}$. For any other node $w$, we have $\partial_w f_v = f_{v_1} \cdot \partial_w f_{v_2} + f_{v_2} \cdot \partial_w f_{v_1}$.
\end{itemize}
The partial derivative has been proven to be a powerful tool in the field of complexity theory and combinatorial geometry~\citep{GUTH20102828, Kaplan2010}. Readers are welcome to refer~\citet{ChenKayalWigderson2011} for more details and extensive background. An illustration is provided in~\Cref{fig:partial-derivative}.
%Let $v$ be a sum node with children $v_1$ and $v_2$, and the edges have weight $a_1$ and $a_2$, respectively, then by definition $f_v = a_1 f_{v_1} + a_2 f_{v_2}$.
\paragraph{Arithmetic Circuits}
\vspace*{-0.5em}
An arithmetic circuit, aka algebraic circuit, is a generalization of a probabilistic circuit. Such a circuit shares the same structure as a PC and also computes a network polynomial. If an arithmetic/algebraic circuit is a directed tree, then we call it an \textit{arithmetic/algebraic formula}. In the proof of the lower bound, the notion of \textit{monotonicity} of a formula is essential, whose definition relies on the concept \textit{parse tree}.
\begin{definition}[Parse Tree]
    A parse tree of a formula $\Phi$ is a sub-formula of $\Phi$ which corresponds to a monomial of $f$, the network polynomial computed by $\Phi$. Parse trees of $\Phi$ is defined inductively by the following process:
    \begin{itemize}[noitemsep, topsep=0pt, parsep=0pt]
        \item If the root of $\Phi$ is a sum node, a parse tree of $\Phi$ is obtained by taking a parse tree of \textbf{one of its children} together with the edge between the root and that child.
        \item If the root of $\Phi$ is a product node, a parse tree of $\Phi$ is obtained by taking a parse tree of \textbf{each of its children} together with \textbf{all} outgoing edges from the root.
        \item The only parse tree of a leaf is itself.
    \end{itemize}
\end{definition}
\begin{definition}[Monotonicity]
    An algebraic formula is monotone if the monomial computed by any of its parse trees has a non-negative coefficient in the network polynomial.
\end{definition}

\section{A Universal Upper Bound}
\label{sec:upperbound}
In this section, we present our first main result, which provides a universal upper bound on the size of an equivalent tree versus a DAG-structured PC.
\begin{theorem} \label{upperbound}
    For any given DAG-structured PC over $n$ variables and of size $\poly(n)$, there exists an equivalent tree-structured PC of size $n^{O(\log n)}$ nodes and of depth $O(\log n)$, computing the same polynomial.
\end{theorem}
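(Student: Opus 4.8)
The plan is to follow the two-phase strategy outlined in the introduction: first rebalance the given DAG-structured PC into an equivalent DAG-structured PC of depth $O(\log n)$ while keeping its size polynomial, and then unfold this shallow DAG into a tree, paying only an $n^{O(\log n)}$ blow-up because the depth is logarithmic.

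For Phase one, the starting observation is that a decomposable, smooth PC computes a homogeneous network polynomial whose degree is at most $n$: every monomial is a product of indicators over pairwise-disjoint scopes, so $\deg(f) \le n$. This bounded degree is what makes the classical depth-reduction machinery of \citet{Valiant1983FastPC} and \citet{raz2008balancing} applicable. The engine of that machinery is the partial derivative $\partial_w f_v$. I would fix a degree threshold and, for each node $v$, consider the frontier $\Gamma_v$ of product nodes $w$ that are the first (reading from $v$ downward) to have $\deg(w) > \deg(v)/2$, i.e.\ product nodes whose own degree exceeds half that of $v$ while each of their children has degree at most $\deg(v)/2$. The central identity is the decomposition
\[
    f_v = \sum_{w \in \Gamma_v} (\partial_w f_v)\cdot f_w,
\]
in which every factor has strictly smaller degree: $\partial_w f_v$ has degree $\deg(v) - \deg(w) < \deg(v)/2$, and $f_w$ is a product of children each of degree at most $\deg(v)/2$. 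The partial-derivative polynomials $\partial_w f_v$ obey an analogous product-rule decomposition, so one can define a new node for each relevant pair $(v,w)$ and wire them together using these identities. Because each application of the identity at least halves the degree, the recursion terminates after $O(\log n)$ rounds; since there are at most $\poly(n)$ pairs $(v,w)$, the total number of new nodes is $\poly(n)$.

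The improvement over the $O(\log^2 n)$ bound of \citet{raz2008balancing} comes from the fan-in. In their setting every node has at most two children, so realizing a single sum over a frontier $\Gamma_v$ of polynomial size already costs a binary addition tree of depth $O(\log n)$, and with $O(\log n)$ rounds this compounds to $O(\log^2 n)$. For PCs there is no such restriction: a weighted sum of $\poly(n)$ terms with a common scope is itself a single smooth sum node, and a product of factors over disjoint scopes is a single decomposable product node, each contributing only $O(1)$ to the depth. Hence each of the $O(\log n)$ rounds adds constant depth, and the rebalanced DAG has depth $O(\log n)$. Throughout, one must check that the construction yields a genuine PC rather than a mere arithmetic circuit: by decomposability $\partial_w f_v$ is a polynomial over $\scope(v)\setminus\scope(w)$, so the product $(\partial_w f_v)\cdot f_w$ has disjoint-scope factors (decomposability preserved), and the summands in each frontier sum share a common scope inherited from $v$ (smoothness preserved).

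For Phase two, I would unfold the rebalanced DAG into a tree by the standard duplication: traverse from the root and make a fresh copy of a node for each distinct directed path reaching it. A node at depth $i$ is copied once per root-to-node path, so the tree size is bounded by the number of directed paths from the root, namely $\sum_{\ell=0}^{D} M^{\ell} \le (D+1)\,M^{D}$, where $M$ is the maximum fan-in and $D$ the depth. With $M \le \poly(n)$ and $D = O(\log n)$, this is $(\poly(n))^{O(\log n)} = n^{O(\log n)}$, and unfolding changes neither the computed polynomial nor decomposability and smoothness. This yields an equivalent tree PC of depth $O(\log n)$ and size $n^{O(\log n)}$. I expect the main obstacle to be the fine-grained analysis of Phase one: verifying that the partial-derivative decomposition can be iterated while simultaneously (i) preserving decomposability and smoothness at every newly created node, (ii) keeping the node and edge count polynomial as the frontiers and derivative gates proliferate across $O(\log n)$ rounds, and (iii) pushing the per-round depth down to $O(1)$ by exploiting unbounded fan-in without inadvertently reintroducing a logarithmic factor. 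The scope bookkeeping for $\partial_w f_v$ and the correct definition of the degree-halving frontier $\Gamma_v$ are where the adaptation of \citet{raz2008balancing} must be done most carefully.
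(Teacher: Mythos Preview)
Your proposal is correct and follows essentially the same approach as the paper: a Raz--Yehudayoff style depth reduction via partial-derivative frontiers (your $\Gamma_v$ is the paper's $\mathbf{G}_m \cap \Phi_v$ with $m$ chosen as a power of two bracketing $\deg(v)$), followed by naive duplication of the resulting $O(\log n)$-depth DAG. The only organizational difference is that the paper first converts $\Phi$ to a \emph{binary} PC so that the key product-rule simplification $\partial_w f_v = f_{v_2}\cdot \partial_w f_{v_1}$ (your ``analogous product-rule decomposition'') has a clean two-child form, and it certifies smoothness of the rebalanced circuit via a homogeneity argument rather than by directly tracking scopes; both are exactly the fine-grained bookkeeping you flag as the main obstacle.
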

As discussed earlier, our constructive proof heavily relies on deeper properties of partial derivatives, and applying them to represent sub-network polynomials. Our strategy, inspired by~\citet{raz2008balancing}, will be efficient if the circuit being considered is a binary circuit, i.e. every node has at most two children. While such structure is rare for natural networks, we make the following observation, that an arbitrary PC can always be transformed to a binary one with a polynomial increase in size and depth. The proof and an illustrating figure will appear in Appendix~\ref{appendix-proofs-upper-bound}.

\begin{lemma} \label{transform-binary}
    Given a DAG-structured PC $\Phi$, we may transform it into another DAG $\Phi'$ that computes the same network polynomial and every node in $\Phi'$ has at most two children. Moreover, the differences between the sizes and depths of $\Phi'$ and $\Phi$ are only in polynomial size.
\end{lemma}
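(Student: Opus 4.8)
The plan is to binarize the circuit \emph{node by node}, replacing each internal node of fan-in $k \ge 3$ by a small gadget --- a balanced binary tree of nodes of the same type --- while leaving nodes of fan-in at most two untouched. Because each gadget computes exactly the same polynomial $f_v$ as the node $v$ it replaces, the transformation is purely local: I would reroute every incoming edge of $v$ to the root of its gadget and keep the children $v_1, \dots, v_k$ (and hence the rest of the shared DAG) in place, so that correctness at every ancestor follows automatically once each gadget is shown to compute $f_v$.

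For a sum node $v$ with $f_v = \sum_{i=1}^k a_i f_{v_i}$, I would build a balanced binary tree of sum nodes whose $k$ leaves are $v_1, \dots, v_k$, place weight $a_i$ on the edge incident to the leaf $v_i$, and place weight $1$ on every internal edge of the gadget. By the distributive law, the product of weights collected along the path from the gadget root to $v_i$ is exactly $a_i$, so the gadget root computes $\sum_i a_i f_{v_i} = f_v$. For a product node $v$ with $f_v = \prod_{i=1}^k f_{v_i}$, I would build a balanced binary tree of product nodes with the same leaves; associativity of multiplication then gives that its root computes $\prod_i f_{v_i} = f_v$.

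Next I would verify that decomposability and smoothness are preserved, noting that each intermediate gadget node computes a combination of only a \emph{subset} of the original children. For a sum gadget, smoothness of $\Phi$ forces every $v_i$ to share the scope $\scope(v)$, so each intermediate sum node and both of its children carry this same scope, and smoothness survives. For a product gadget, decomposability of $\Phi$ makes the scopes of the $v_i$ pairwise disjoint, so the two children of any intermediate product node compute over disjoint unions of these scopes and remain disjoint; decomposability survives. Nodes of fan-in at most two are left unchanged and require no check.

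Finally I would bound the blowup. A binary tree with $k$ leaves has $k-1$ internal nodes, so $v$ contributes at most $k-1 \le |\child(v)|$ new nodes; summing over all nodes, the total number of added nodes and edges is at most $O(|\Phi|)$, keeping the size polynomial. For the depth, a balanced gadget replacing a node of fan-in $k \le |\Phi|$ has depth $O(\log|\Phi|) = O(\log n)$ since $|\Phi| = \poly(n)$, and any root-to-leaf path in $\Phi'$ traverses at most $D(\Phi)$ original nodes, so $D(\Phi') = O(D(\Phi)\cdot \log n)$, again polynomial. I expect the only delicate point to be the weight bookkeeping for the sum gadgets --- ensuring the path-product of weights reproduces each $a_i$ exactly --- while the structural and counting arguments are routine; this is also the step where the more restrictive binary-from-the-outset convention of~\citet{raz2008balancing} is sidestepped by handling arbitrary fan-in directly.
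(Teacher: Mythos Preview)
Your proposal is correct but differs from the paper's construction. The paper replaces each fan-in-$k$ node by a \emph{linear chain}: for a sum node with children $M_1,\dots,M_k$, it keeps $M_1$ and hangs $M_2,\dots,M_k$ off a cascade of new intermediate sum nodes, each separated from the next by a single-child dummy product node so that sums and products continue to alternate (and dually for product nodes). This adds $2(k-1)$ nodes and $2(k-1)$ to the depth per gadget, for a total size increase of $O(E)$ and a depth increase the paper bounds by $O(E)$, or more tightly by $O(M)$ where $M$ is the maximum fan-in. Your balanced binary trees match the size bound and are strictly sharper on depth, giving $D(\Phi') = O\bigl(D(\Phi)\log|\Phi|\bigr)$ rather than an additive $O(M)$. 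What you give up is the alternating sum/product structure: your gadgets contain sum--sum and product--product edges, whereas the paper's chain inserts single-child dummies (the set it later calls $\Phi_1$) precisely to avoid that. The paper subsequently remarks that $\Phi_1 \cap \mathbf{G}_m = \emptyset$, but this is a cosmetic convenience rather than a load-bearing step in the construction of $\Psi$, so your argument proves the lemma as stated and would feed into the rest of the upper-bound argument with only superficial adjustments.
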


Therefore, for the remaining discussion in this section, we will assume without loss of generality, that a given PC is binary. During the conversion process towards a binary circuit, some \textit{intermediate} nodes may be created to ensure no sum node is connected to another sum node and no product node is connected to another product node. The set of those intermediate nodes is denoted by $\Phi_1$, and will be present in our later discussions. Next, we present a key property of partial derivatives, which holds for any (including non-binary) PC.
\begin{restatable}{lemma}{variable}
    \label{variable}
    Given a PC $\Phi$, if $v$ and $w$ are two nodes in $\Phi$ such that $\partial_w f_v \neq 0$, then $\partial_w f_v$ is a homogeneous polynomial over the set of variables $X_v \setminus X_w$ of degree $\deg(v) - \deg(w)$.
\end{restatable}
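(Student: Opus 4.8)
The plan is to prove the statement by structural induction on the sub-circuit $\Phi_v$ rooted at $v$ --- concretely, by induction on the height of $v$ (the length of the longest directed path from $v$ to a leaf), so that every child of $v$ forms a strictly smaller instance. I would deliberately avoid inducting on $\dist(v,w)$, since in a DAG a child $v_i$ of $v$ from which $w$ remains reachable need not satisfy $\dist(v_i,w) < \dist(v,w)$; the height (equivalently, reverse topological order) is the robust measure. The three asserted properties --- homogeneity, the variable set $X_v \setminus X_w$, and the degree $\deg(v) - \deg(w)$ --- will be carried together as a single induction invariant, and the two chain-rule identities recorded just before the lemma (together with their obvious multi-child generalizations) will drive the step. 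Throughout I will use that the PC is homogeneous, i.e.\ that each node computes a homogeneous polynomial with $\deg(v) = \sum_i \deg(v_i)$ at a product node and $\deg(v_i) = \deg(v)$ at each child of a smooth sum node; this follows from decomposability and smoothness. The base case is $v = w$: substituting $f_w$ by $y$ gives $\bar f_v = y$, so $\partial_w f_v = 1$, a constant, homogeneous of degree $0 = \deg(v) - \deg(w)$ over the empty set $X_v \setminus X_w = \emptyset$.

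For the inductive step with $v$ a sum node, smoothness guarantees that every child $v_i$ has $X_{v_i} = X_v$ and $\deg(v_i) = \deg(v)$. Since $\partial_w f_v \neq 0$, at least one child $v_i$ has $w$ as a descendant and contributes a nonzero term $a_i\, \partial_w f_{v_i}$; by the inductive hypothesis each such term is homogeneous over $X_{v_i} \setminus X_w = X_v \setminus X_w$ of degree $\deg(v_i) - \deg(w) = \deg(v) - \deg(w)$. A weighted sum of homogeneous polynomials sharing both their variable set and their degree is again homogeneous of that degree, which closes this case.

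The product-node case is where the real content --- and the role of decomposability --- lies, and I expect it to be the main obstacle. The key claim is that $w$ can be a descendant of \emph{at most one} child of a product node $v$: if $w$ were reachable from two children $v_i$ and $v_j$, then $\scope(w) \subseteq \scope(v_i) \cap \scope(v_j) = \emptyset$ by decomposability, contradicting that $w$ has an indicator in its scope (this is exactly the fact making $\bar f_v$ linear in $y$). Hence the generalized product rule collapses to a single surviving term $\partial_w f_v = \big(\prod_{j \neq i} f_{v_j}\big) \cdot \partial_w f_{v_i}$, where $v_i$ is the unique child having $w$ as a descendant. By the inductive hypothesis $\partial_w f_{v_i}$ is homogeneous over $X_{v_i} \setminus X_w$ of degree $\deg(v_i) - \deg(w)$, while each $f_{v_j}$ with $j \neq i$ is homogeneous over $X_{v_j}$ of degree $\deg(v_j)$.

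It then remains to check that the product inherits the claimed data. Because $X_w \subseteq X_{v_i}$ and the scopes $X_{v_1}, \dots, X_{v_k}$ are pairwise disjoint, $X_w$ is disjoint from every $X_{v_j}$ with $j \neq i$, so the variable set of the product is $(X_{v_i} \setminus X_w) \sqcup \bigsqcup_{j \neq i} X_{v_j} = X_v \setminus X_w$, and its degree is $(\deg(v_i) - \deg(w)) + \sum_{j \neq i} \deg(v_j) = \deg(v) - \deg(w)$; a product of homogeneous polynomials is homogeneous of the summed degree, completing the induction. The only remaining care is bookkeeping for a direct child equal to $w$, which falls out of the base case applied one level down, and confirming that the disjointness invoked above is precisely the decomposability hypothesis rather than mere smoothness.
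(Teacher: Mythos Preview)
Your proof is correct and takes a genuinely different route from the paper. The paper inducts on the length of the longest $v$--to--$w$ path and, in the inductive step, moves \emph{upward} from $w$ to a chosen parent $u$ along one such path, casing on the type of $w$ and relating $\partial_w f_v$ to $\partial_u f_v$ through an identity of the form $\overline{f}_v(y)=\widehat{f}_v(y\cdot f_{w'})$ when $u$ is a product node, or $\overline{f}_v(y)=\widehat{f}_v(ay+bf_{w'})$ when $u$ is a sum node. You instead induct on the height of $v$ and move \emph{downward} to its children, casing on the type of $v$ and invoking the two chain rules directly. Your direction buys two things. First, it handles non-binary nodes without modification, which matters since the lemma is asserted for arbitrary PCs. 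Second, it is robust to the DAG structure: the single surviving term at a product node is justified purely by decomposability (at most one child can have $w$ in its subcircuit), so no assumption about uniqueness of parents is needed. The paper's upward step, as written, fixes one parent $u$ of $w$; in a genuine DAG $w$ may have several parents reachable from $v$, and then the substitution $f_w\to y$ alters all of them simultaneously, so the displayed identity requires an additional summation over those parents to be literally correct. Your top-down recursion sidesteps this bookkeeping entirely.
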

% \begin{lemma} \label{variable} \lang{move to appendix}
%     Given a PC $\Phi$, if $v$ and $w$ are two nodes in $\Phi$ such that $\partial_w f_v \neq 0$, then $\partial_w f_v$ is a homogeneous polynomial over the set of variables $X_v \setminus X_w$ of degree $\deg(v) - \deg(w)$.
% \end{lemma}
The next lemma tells that, given a product node, its partial derivative with another node with a restriction on degree can be expressed using its children.
\begin{restatable}{lemma}{raz}
    \label{partial-prodchild}
    Let $v$ be a product node and $w$ be any other node in a PC $\Phi$, and $\deg(v) < 2 \deg(w)$. The children of $v$ are $v_1$ and $v_2$ such that $\deg(v_1) \geq \deg(v_2)$. Then $\partial_w f_v = f_{v_2} \cdot \partial_w f_{v_1}$.
\end{restatable}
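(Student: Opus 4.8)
The plan is to begin with the product rule for partial derivatives and then eliminate one of the two resulting terms by a degree-counting argument. Applying the chain rule for a product node $v$ with children $v_1, v_2$ gives $\partial_w f_v = f_{v_1}\cdot\partial_w f_{v_2} + f_{v_2}\cdot\partial_w f_{v_1}$. Comparing this with the claimed identity, the entire content of the lemma reduces to showing that the cross term vanishes, i.e.\ that $\partial_w f_{v_2} = 0$ under the stated degree hypothesis.

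To prove $\partial_w f_{v_2}=0$, I would first account for the degrees. Since $v$ is a product node, decomposability gives $\scope(v_1)\cap\scope(v_2)=\emptyset$, so $f_v=f_{v_1}f_{v_2}$ is a product of polynomials in disjoint sets of variables; consequently no cancellation of top-degree monomials can occur, and $\deg(v)=\deg(v_1)+\deg(v_2)$. Combining this with the assumption $\deg(v_1)\geq\deg(v_2)$ yields $2\deg(v_2)\leq\deg(v_1)+\deg(v_2)=\deg(v)$, so $\deg(v_2)\leq\deg(v)/2$. The hypothesis $\deg(v)<2\deg(w)$ then gives $\deg(v_2)\leq\deg(v)/2<\deg(w)$, so that $\deg(v_2)-\deg(w)<0$. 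Here the strictness of the hypothesis is exactly what is needed to force a strict inequality even in the balanced case $\deg(v_1)=\deg(v_2)$.

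The final step invokes~\Cref{variable}. If $\partial_w f_{v_2}$ were nonzero, that lemma would force it to be a homogeneous polynomial of degree $\deg(v_2)-\deg(w)$; but we have just shown this quantity is negative, which is impossible for any polynomial. Hence $\partial_w f_{v_2}=0$, the cross term $f_{v_1}\cdot\partial_w f_{v_2}$ drops out, and we are left with $\partial_w f_v=f_{v_2}\cdot\partial_w f_{v_1}$, as claimed.

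I expect the only genuine subtlety---rather than the arithmetic---to be the clean application of~\Cref{variable}: the key realization is that the degree characterization there is not merely descriptive but functions as a vanishing criterion, since a prescribed negative degree rules out any nonzero polynomial. Everything else is routine degree bookkeeping, and the argument needs no case analysis on whether $w$ is a descendant of $v_2$, because the degree obstruction applies uniformly.
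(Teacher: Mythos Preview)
Your proposal is correct and essentially identical to the paper's proof: both use the chain rule for product nodes, derive $\deg(v_2)<\deg(w)$ from $\deg(v)=\deg(v_1)+\deg(v_2)$ together with $\deg(v)<2\deg(w)$, and then invoke \Cref{variable} to force $\partial_w f_{v_2}=0$. You have simply spelled out the degree bookkeeping and the role of \Cref{variable} as a vanishing criterion in more detail than the paper does.
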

% \begin{lemma}[\citet{raz2008balancing}]  
%     \label{partial-prodchild}
%     Let $v$ be a product node and $w$ be any other node in an SPN $\Phi$, and $\deg(v) < 2 \deg(w)$. The children of $v$ are $v_1$ and $v_2$ such that $\deg(v_1) \geq \deg(v_2)$. Then $\partial_w f_v = f_{v_2} \cdot \partial_w f_{v_1}$.
% \end{lemma}
To construct the quasi-polynomial tree, the key is to compress many nodes with partial derivatives. Fundamentally, we will use the following results to show that such compression works because each node, and each partial derivative of any node with any other, can be more concisely represented using partial derivatives. The key question is to find eligible nodes, so that taking partial derivatives with respect to them will lead to compact expressions. Inspired by the observation in~\citet{raz2008balancing}, we define the following set $\mathbf{G}_m$, which will be critical in our later process.
\begin{definition}
    Given a PC $\Phi$ and an integer $m \in \mathbb{N}$, the set $\mathbf{G}_m$ is the collection of product nodes $t$ in $\Phi$ with children $t_1$ and $t_2$ such that $m < \deg(t)$ and $\max \left\{ \deg(t_1), \deg(t_2) \right\} \leq m$.
\end{definition}
With this set, we may choose a set of nodes as variables for partial derivatives for any node in a PC, and the following two lemmas respectively illustrate: 1) the compact expression of the sub-network polynomial $f_v$ for any node $v$ in a PC; 2) the compact expression of $\partial_w f_v$ given two nodes $v$ and $w$ with a degree restriction. It is easy to observe that $\Phi_1 \cap \mathbf{G}_m = \emptyset$.

We now present two key lemmas that will be central to the proof for the upper bound. Specifically, they spell out the alternative representations for the network polynomial of any node, and the partial derivative of any pair of nodes.
% The proofs will be in the Appendix.
\begin{lemma}[\cite{raz2008balancing}]
    \label{rep-poly}
    Let $m \in \mathbb{N}$ and a node $v$ such that $m < \deg(v) \leq 2m$, then $f_v = \sum_{t \in \mathbf{G}_m} f_t \cdot \partial_t f_v$.
\end{lemma}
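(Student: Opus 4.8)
The plan is to prove the identity by induction on the height of $v$ (the length of the longest directed path from $v$ down to a leaf), using the chain rule for partial derivatives together with the degree bookkeeping furnished by \Cref{variable}. Two preliminary observations will organize everything. First, $\partial_t f_v = 0$ whenever $t$ is not a descendant of $v$, since then $f_t$ never enters the computation of $f_v$, so substituting it by $y$ leaves $\bar f_v$ independent of $y$. Second, by \Cref{variable} a nonzero $\partial_t f_v$ forces $\deg(t) \le \deg(v) \le 2m$, so among the indices $t \in \mathbf{G}_m$ (each of which has $\deg(t) > m$) only those with $m < \deg(t) \le 2m$ can contribute to the sum. I will also use that a leaf has degree $1$ and hence never satisfies the hypothesis $m < \deg(v)$ for integer $m \ge 1$, which guarantees the recursion bottoms out inside $\mathbf{G}_m$ rather than at a leaf.

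For the base case I take $v \in \mathbf{G}_m$: then $v$ is a product node with $m < \deg(v) \le 2m$ whose children $v_1, v_2$ both have degree at most $m$. Here I claim the sum collapses to the single term $t = v$. Indeed $\partial_v f_v = 1$, since substituting $f_v$ by $y$ gives $\bar f_v = y$. For any other $t \in \mathbf{G}_m$: if $t$ is not a descendant of $v$ then $\partial_t f_v = 0$ by the first preliminary fact; and if $t$ is a proper descendant, then $t$ lies below $v_1$ or $v_2$, so $\deg(t) \le \max\{\deg(v_1),\deg(v_2)\} \le m$, contradicting $t \in \mathbf{G}_m$. (Here I use that degree is non-increasing along descendants, which follows from homogeneity: product nodes add their children's degrees and sum nodes preserve them.) Hence $\sum_{t \in \mathbf{G}_m} f_t \cdot \partial_t f_v = f_v \cdot 1 = f_v$.

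For the inductive step, if $v$ is a sum node with children $v_1,v_2$ and weights $a_1,a_2$, then smoothness and homogeneity give $\deg(v_1) = \deg(v_2) = \deg(v)$, so each child still lies in the range $(m,2m]$. The chain rule gives $\partial_t f_v = a_1 \partial_t f_{v_1} + a_2 \partial_t f_{v_2}$, and pulling the weights out of the sum and applying the induction hypothesis to $v_1,v_2$ yields $\sum_t f_t \,\partial_t f_v = a_1 f_{v_1} + a_2 f_{v_2} = f_v$. If instead $v$ is a product node with $v \notin \mathbf{G}_m$, then from $\deg(v_1)+\deg(v_2) = \deg(v) \le 2m$ exactly one child, say $v_1$ (the higher-degree one, so $\deg(v_1) \ge \deg(v_2)$), has $\deg(v_1) > m$, forcing $\deg(v_2) < m$ and $m < \deg(v_1) \le \deg(v) \le 2m$. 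Since every $t \in \mathbf{G}_m$ satisfies $2\deg(t) > 2m \ge \deg(v)$, \Cref{partial-prodchild} applies and gives $\partial_t f_v = f_{v_2}\,\partial_t f_{v_1}$ for each such $t$; therefore $\sum_t f_t \,\partial_t f_v = f_{v_2}\sum_t f_t \,\partial_t f_{v_1} = f_{v_2} f_{v_1} = f_v$ by the induction hypothesis on $v_1$. Both recursive calls are to strictly shorter nodes whose degree still lies in $(m,2m]$, so the induction is well-founded.

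The main obstacle I anticipate is the degree bookkeeping rather than any deep idea: at each step one must confirm that the hypothesis $m < \deg \le 2m$ is preserved (immediate in the sum case, but in the product case it hinges on the surviving child having degree strictly above $m$ yet at most $\deg(v) \le 2m$), and that the low-degree terms genuinely vanish. The cleanest way to discharge the vanishing is precisely through \Cref{variable} and \Cref{partial-prodchild}, so the base-case collapse and the product-node reduction are the two places to argue most carefully, while everything else reduces to linearity of the chain rule.
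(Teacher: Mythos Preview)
Your proposal is correct and follows essentially the same route as the paper: a base case $v \in \mathbf{G}_m$ where the sum collapses to the single term $t=v$, and an inductive step split into the sum-node case (chain rule plus $\deg(v_1)=\deg(v_2)=\deg(v)$) and the product-node case (\Cref{partial-prodchild} plus $\deg(v_1)>m$ for the surviving child). The only cosmetic difference is the induction variable---you use the height of $v$, the paper uses the maximum distance from $v$ to a node of $\mathbf{G}_m$---but the recursion and the degree bookkeeping are identical.
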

\begin{lemma}[\cite{raz2008balancing}]
    \label{rep-pd}
    Let $m \in \mathbb{N}$, and $v$ and $w$ be two nodes such that $\deg(w) \leq m < \deg(v) < 2 \deg(w)$, then $\partial_w f_v = \sum_{t \in \mathbf{G}_m} \partial_w f_t \cdot \partial_t f_v$.
\end{lemma}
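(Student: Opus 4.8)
The plan is to derive \Cref{rep-pd} by differentiating the representation of \Cref{rep-poly} with respect to the node $w$. First I would verify that \Cref{rep-poly} applies to $v$: the hypotheses give $\deg(v) < 2\deg(w) \le 2m$ together with $\deg(v) > m$, so the condition $m < \deg(v) \le 2m$ is met and we may write $f_v = \sum_{t \in \mathbf{G}_m} f_t \cdot \partial_t f_v$.

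Next I would apply the operator $\partial_w$ to both sides of this identity. Since $\partial_w$ amounts to differentiation with respect to the formal variable $y = f_w$, it is linear and obeys the product rule recorded just after the definition of partial derivatives. Differentiating term by term then yields
\begin{equation*}
\partial_w f_v = \sum_{t \in \mathbf{G}_m} \bigl( \partial_w f_t \cdot \partial_t f_v + f_t \cdot \partial_w \partial_t f_v \bigr).
\end{equation*}
The first group of summands is exactly the right-hand side claimed in \Cref{rep-pd}, so it remains to show that the cross term $\sum_{t \in \mathbf{G}_m} f_t \cdot \partial_w \partial_t f_v$ vanishes identically.

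To see this, fix $t \in \mathbf{G}_m$ with $\partial_t f_v \neq 0$. By \Cref{variable}, $\partial_t f_v$ is homogeneous of degree $\deg(v) - \deg(t)$, and using $\deg(t) > m$, $\deg(v) < 2\deg(w)$, and $m \ge \deg(w)$ in turn I obtain
\begin{equation*}
\deg(\partial_t f_v) = \deg(v) - \deg(t) < 2\deg(w) - m \le \deg(w).
\end{equation*}
Because the circuit is homogeneous, $f_w$ is itself homogeneous of degree $\deg(w)$, so a polynomial of degree strictly below $\deg(w)$ cannot depend on $f_w$; differentiating it with respect to $w$ therefore gives $\partial_w \partial_t f_v = 0$. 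Hence every summand of the cross term vanishes and the identity collapses to the desired one.

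The step I expect to be the main obstacle is making this last argument fully rigorous: I must justify both that $\partial_w$ distributes term by term over the right-hand side of \Cref{rep-poly} and that $\partial_w g = 0$ whenever $g$ is homogeneous with $\deg(g) < \deg(w)$. The latter is the contrapositive of the degree formula in \Cref{variable}, but applied to a polynomial that is not a single node's network polynomial, so it requires extending that degree bookkeeping to iterated partial derivatives. Should this route prove delicate, the fallback is a direct structural induction on $v$ mirroring the proof of \Cref{rep-poly}: the base case $v \in \mathbf{G}_m$ is immediate since $\partial_t f_v = 0$ for $t \neq v$ and $\partial_v f_v = 1$; the sum-node case follows from linearity of $\partial_w$; and the product-node case follows from \Cref{partial-prodchild}, which factors both $\partial_w f_v = f_{v_2} \cdot \partial_w f_{v_1}$ and $\partial_t f_v = f_{v_2} \cdot \partial_t f_{v_1}$ through the higher-degree child $v_1$, after which the inductive hypothesis on $v_1$ closes the argument.
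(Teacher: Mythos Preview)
Your fallback—structural induction on $v$, with the base case $v\in\mathbf{G}_m$, the sum-node step via linearity, and the product-node step via two applications of \Cref{partial-prodchild}—is precisely the proof the paper gives (there phrased as induction on the longest distance from $v$ to $\mathbf{G}_m$).

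Your primary route is different and would be slicker if it went through, but the obstacle you flag is genuine and not merely cosmetic. The operation $\partial_w$ is defined at the circuit level: one replaces the \emph{node} $w$ by a fresh leaf $y$ and reads off the $y$-coefficient of the resulting node polynomial. It is therefore only defined on polynomials of the form $f_u$ for nodes $u$ of $\Phi$; the expression $\partial_w(\partial_t f_v)$ has no a priori meaning because $\partial_t f_v$ is not computed at any node. Likewise, the product rule you invoke is recorded only for $f_v=f_{v_1}f_{v_2}$ with $v_1,v_2$ actual children of a product node $v$, not for an arbitrary factorization such as $f_t\cdot\partial_t f_v$. One can attempt to repair this by showing that the identity of \Cref{rep-poly} survives the substitution $w\mapsto y$ at the circuit level and then comparing $y$-coefficients, but carrying that out essentially reproduces the structural induction—so the paper, like your fallback, simply runs the induction directly.
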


\subsection{Construction of $\Psi$, another DAG-structured PC with restriction on depth}
\begin{algorithm}[tb] \label{algo-psi}
    \caption{Construction of $\Psi$}
    \KwData{The original DAG-structured PC $\Phi$ with $n$ variables of size $\poly(n)$, and the set of its nodes $\mathcal{V}$}
    \KwResult{Another DAG-structured PC $\Psi$ of size $\poly(n)$ and depth $O(\log n)$}
    $i \gets 0$; $\mathcal{V} \gets \emptyset$; $\mathcal{P} \gets \emptyset$.

    \For{$i = 0, 1, \lceil \log n \rceil-1$}{
        Fix $m_1 \gets 2^i$;

        Find all nodes $v$ such that $2^i < \deg(v) \leq 2^{i+1}$, and place them in $\mathcal{V}$;

        Find all pairs of nodes $(u,w)$ such that $2^i < \deg(u) - \deg(w) \leq 2^{i+1}$ and $\deg(u) < 2 \deg(w)$, and place them in $\mathcal{P}$;

        %Fix $m_2 \gets 2^i + \deg(w)$;

        \For{every $v \in \mathcal{V}$}{
            Find all nodes in $\mathbf{G}_{m_1}$ and compute $f_v$ using Equation 15;
        }
        \For{every pair of nodes $(u,w) \in \mathcal{P}$}{
            Fix $m_{2,(u,w)} \gets 2^i + \deg(w)$;\\
            Find all nodes in $\mathbf{G}_{m_{2,(u,w)}}$ and compute $\partial_w f_v$ using Equation 17;
        }
        $\mathcal{V} \gets \emptyset$; $\mathcal{P} \gets \emptyset$.
    }
\end{algorithm}
Given a binary DAG-structured PC $\Phi$ with $n$ variables and $\poly(n)$ nodes, we explicitly construct a tree PC with size $n^{O(\log n)}$ and depth $O(\log n)$. Specifically, the construction takes two main steps:
\begin{enumerate}[noitemsep, topsep=0pt, parsep=0pt]
    \item Transform $\Phi$ to another DAG-structured PC $\Psi$ with size $\poly(n)$ and depth $O(\log n)$.
    \item Apply a simple duplicating strategy to further convert $\Psi$ to a tree with size $n^{O(\log n)}$ and the same depth of $\Psi$.
\end{enumerate}
We will later show that step two can be simply done. Step one, however, needs much more careful operations. Each iteration, starting from $i=0$, again needs two steps:
\begin{enumerate}[noitemsep, topsep=0pt, parsep=0pt]
    \item Compute $f_v$ for each node $v$ such that $2^{i-1} < \deg(v) \leq 2^i$ using the compact expression illustrated earlier. We will show that, computing one such polynomial adds $\poly(n)$ nodes and increases the depth by at most two on $\Psi$. This new node representing $f_v$ will be a node in $\Psi$, denoted by $v'$.
    \item Compute all partial derivatives $\partial_w f_u$ for two non-variable nodes $u$ and $w$ in $\Phi$, such that $u$ is an ancestor of $w$ and $2^{i-1} < \deg(u) - \deg(w) \leq 2^i$ and $\deg(u) < 2 \deg(w)$. Like those new nodes representing sub-network polynomials from $\Phi$, this new node representing a partial derivative will also be a node in $\Psi$, denoted by $(u, w)$. We will show that computing a partial derivative with respect to each pair adds $\poly(n)$ nodes and increases the depth by at most two on $\Psi$.
\end{enumerate}

The process is summarized in Algorithm~\ref{algo-psi}. Before presenting the construction, we first confirm the quantitative information of $\Psi$, the output of the algorithm. The first observation is the number of iterations: The degree of the root of $\Phi$ is $n$, so at most $\log n$ iterations are needed for the entire process. Each iteration only increases the size of the updated circuit by $\poly(n)$ and the depth by a constant number. Consequently, the final form of $\Psi$ has size $\poly(n)$ and depth $O(\log n)$.

We now provide an inductive construction of $\Psi$ starting from $i=0$. After each step, it is necessary to verify the validity of the updated $\Psi$. Although decomposability is easy to verify, smoothness is less straightforward. To tackle this, we argue that the final state of $\Psi$ is homogeneous, i.e. every node in $\Psi$ computes a homogeneous polynomial, and consequently $\Psi$ is smooth due to the following lemma.
\begin{lemma} \label{smooth-homo-equivalence}
    If a decomposable PC contains $n$ variables and computes a polynomial of degree $n$, then it is homogeneous if and only if it is smooth.
\end{lemma}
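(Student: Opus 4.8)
The plan is to prove both implications by relating the \emph{degree} of each node to the size of its \emph{scope}, using the single structural fact that decomposability forces every monomial to be multilinear. Concretely, I would first show by a bottom-up induction that for any decomposable PC and any node $v$, every monomial occurring in $f_v$ is a product of distinct indicators coming from distinct variables of $\scope(v)$: at a product node this uses $\scope(v_1)\cap\scope(v_2)=\emptyset$, so no variable can be repeated across the two factors, and at a sum node each monomial is inherited from a single child. This immediately yields the key inequality $\deg(v)\le |\scope(v)|$ for every node, which is the backbone of both directions.

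For the direction \emph{smooth $\Rightarrow$ homogeneous}, I would strengthen the statement and prove by induction on the structure that every node $v$ computes a homogeneous polynomial of degree \emph{exactly} $|\scope(v)|$. Leaves are degree $1$ with a singleton scope. At a product node, decomposability gives $\scope(v)=\scope(v_1)\sqcup\scope(v_2)$, and since the product of two homogeneous polynomials is homogeneous with degrees adding, $\deg(v)=|\scope(v_1)|+|\scope(v_2)|=|\scope(v)|$. At a sum node, smoothness guarantees that all children share the scope $\scope(v)$, so by the inductive hypothesis they all have the same degree $|\scope(v)|$, and a weighted sum of homogeneous polynomials of equal degree is again homogeneous of that degree. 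Note this direction uses neither the global degree-$n$ hypothesis nor any sign condition on the weights.

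For the converse \emph{homogeneous $\Rightarrow$ smooth}, I would argue top-down from the root and prove that $\deg(v)=|\scope(v)|$ holds at every node; smoothness then falls out at the sum nodes. The root satisfies it because $\deg=n$ by hypothesis while multilinearity forces $|\scope|=n$ as well (a degree-$n$ multilinear monomial must use $n$ distinct variables). At a product node one propagates the identity downward: $\deg(v_1)+\deg(v_2)=\deg(v)=|\scope(v)|=|\scope(v_1)|+|\scope(v_2)|$ combined with $\deg(v_i)\le|\scope(v_i)|$ forces equality in each coordinate. The crux is the sum node: here I would use homogeneity of $f_v$ together with the fact that the network polynomial has \emph{nonnegative} coefficients (the normalized, probabilistic setting), so that no cancellation can occur; then any child contributing a monomial of degree $\deg(v_i)<\deg(v)$ would leave a monomial of that lower degree in $f_v$, contradicting homogeneity. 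Hence every child has $\deg(v_i)=\deg(v)=|\scope(v)|$, and since $\scope(v_i)\subseteq\scope(v)$ with $|\scope(v_i)|=\deg(v_i)=|\scope(v)|$, we conclude $\scope(v_i)=\scope(v)$, which is exactly smoothness at $v$.

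The main obstacle, and the reason both the degree-$n$ condition and the absence of cancellation are needed, is precisely this sum-node step: without them a non-smooth circuit can still be homogeneous (for instance, a root summing $x_1 x_2$, $x_1$, and $-x_1$ computes the homogeneous degree-$2$ polynomial $x_1 x_2$ over two variables while having children of unequal scope). Ruling out such degenerate cancellations, and anchoring the top-down induction at the root through the degree-$n$ assumption, is where the real content of the lemma lies; the remaining steps are routine structural inductions.
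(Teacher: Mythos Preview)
Your argument is correct and rests on the same content as the paper's: both directions hinge on the identity $\deg(v)=|\scope(v)|$, established trivially under smoothness and recovered from the root downward under the degree-$n$ hypothesis when the circuit is homogeneous. The organization, however, differs. The paper proves the converse by induction on depth, recursively invoking the lemma on the sub-circuits below the root, and it leaves the no-cancellation step (``if $|X_{R_i}|<n$ then $R$ is not homogeneous of degree $n$'') implicit. Your top-down propagation of the single invariant $\deg(v)=|\scope(v)|$ is more direct: it handles product and sum nodes uniformly, and it isolates exactly where the two nontrivial hypotheses enter---the degree-$n$ assumption anchors the induction at the root, and nonnegativity of the weights is what blocks the sum-node cancellation (your $x_1x_2 + x_1 - x_1$ example is precisely the obstruction). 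What your presentation buys is a cleaner dependency structure and an explicit identification of the monotonicity assumption; what the paper's depth induction buys is nothing extra---it is essentially the same argument, just phrased so that the invariant is rediscovered at each level rather than carried explicitly.
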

\paragraph{Iteration zero $(i=0)$:}
\vspace*{-1em}
During this iteration, for the first step, we only need to consider nodes $v$ such that $0.5 < \deg(v) \leq 1$; the degree of any node must be an integer, so we must have $\deg(v)=1$, i.e. $v$ represents an affine polynomial. Without loss of generality, we may assume all such affine nodes are sum nodes with strictly more than one child. Indeed, if a product node represents an affine polynomial, then it must only have exactly one child, which must be a leaf node; in this case, we may remove this product node and connect that leaf to the parents of the original product node. Similarly, if a sum node represents an affine polynomial and has exactly one child, then that child must also be a leaf node, hence we may again remove the sum node and connect that leaf to the parents of the original sum node. Due to smoothness, such an affine node $v$ must represent a polynomial in the form $ax + (1-a) \Bar{x}$, where $x$ is the indicator of a variable, and $0 < a <1$. Therefore, the depth of each sub-network $\Phi_v$ is only one. By duplicating all such affine nodes onto $\Psi$, we add at most $\poly(n)$ nodes and increase the depth by one only.

Next, for step two, we only need to consider pairs of nodes $(u, w)$ such that $\deg(u) - \deg(w) \leq 1$. Thanks to Lemma \ref{variable}, we know that $\partial_w f_u$ is affine. For each pair satisfying the restriction, we create a sum node $(u, w)$ whose sub-network $\Phi_{(u,w)}$ has size three and depth one. By moving all such sub-networks to $\Psi$ for each eligible pair, we again add at most $\poly(n)$ nodes and increase the depth by one to $\Psi$.

\paragraph{Iteration $i+1$:}
\vspace*{-1em}
Suppose, after all previous iterations, we have already computed all sub-network polynomials $f_v$ for nodes $v$ such that $\deg(v) \leq 2^i$, and all partial derivatives $\partial_w f_u$ for pairs of nodes $(u, w)$ such that $\deg(u) - \deg(w) \leq 2^i$ and $\deg(u) \leq 2 \deg(w)$. Like the base case, step $i+1$ takes two steps: The first step computes $f_v$ for eligible nodes, and the second step concerns partial derivatives for eligible pairs of nodes. Because the analysis of the two steps during this iteration is highly involved, we will discussion the construction in details in Appendix~\ref{appendix:sec-construction-psi}.

\subsection{Construction of the Quasi-polynomial Tree}
\begin{algorithm}[tb] \label{algo-duplication}
    \caption{Transforming a rooted DAG to a tree}
    \KwData{A rooted DAG of size $S$ and depth $D$, and the set of its nodes $\mathcal{V}$}
    \KwResult{A tree of size $O(S^D)$ and depth $D$}

    \For{every node $v$ in $\mathcal{V}$}{
        \If{$\indeg(v)>1$}{
            Duplicate the tree rooted at $v$ for $\indeg(v)-1$ times;

            Construct an outgoing edge from each parent of $v$ to itself.
        }
    }
\end{algorithm}
We conclude the proof of \Cref{upperbound} in this section by transforming the newly constructed $\Psi$ into a quasi-polynomial tree. The transformation is a simple application of the \textit{naive duplication} strategy, which will be illustrated below. In summary, given a $\poly(n)$-sized DAG, the size of the transformed tree directly depends on the depth of the original DAG. The process of the duplication is briefly summarized in Algorithm~\ref{algo-duplication}, and the detailed process of the entire transformation from the original $\Phi$ to the final tree is described in Algorithm~\ref{algo-entire-upper-bound}.

\paragraph{Duplication Strategy}
% \vspace*{-1em}
Given a DAG-structured PC of size $V$ and depth $D$, a natural algorithm to a tree is that, if a node $v$ has $k>1$ parents, then duplicate the sub-tree rooted at $v$ for $k-1$ times, and connect each duplicated sub-tree to a parent of $v$. Indeed this algorithm generates a tree computing the same function as the original DAG does, but in the worst case we have to duplicate the entire graph $O(V)$ times and such iterative duplication may be executed for every layer from the first to layer $D$. Therefore, in the worst case, the final tree has size $O(V^D)$.

The construction of $\Psi$ shows that its size is $O(n^3)$ and depth is $O(\log n)$. Using the naive duplication, we obtain that the size of the final tree is $n^{O(\log n)}$.

% end of section

\section{A Conditional Lower Bound}
\label{sec:lowerbound}
In this section, we present our second main result, which provides a lower bound on the tree complexity of a network polynomial given a restriction on the depth of the tree. Obtaining a lower bound for the problem of circuit complexity is in general a more difficult problem than obtaining an upper bound because one cannot achieve this goal by showing the failure of a single algorithm. Instead, one must construct a specific polynomial, and confirm that no algorithm can produce an equivalent tree of size lower than the desired lower bound. However, thanks to some recent results in circuit complexity theory, such a separation is ensured if the tree PC has a bounded depth. The main result in this section is presented below, stating that, there is a concrete network polynomial that cannot be represented by a polynomial-sized tree-structured PC if the depth of the tree is restricted.

\begin{theorem} \label{lower-bound}
    Given an integer $k \geq 1$ and $n=2^{2k}$, there exists a network polynomial $P \in \mathbb{R} [x_1, \cdots, x_n, \bar{x}_1, \cdots, \bar{x}_n]$ of degree $n = 2^{2k}$, such that any probabilistic tree of depth $o(\log n) = o(k)$ computing $P$ must have size $n^{\omega(1)}$.
\end{theorem}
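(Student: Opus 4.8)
The plan is to prove the separation by a reduction to the low-depth algebraic formula lower bound of \citet{fournier2023}. Fix $k$ and $n = 2^{2k}$, and let $Q$ be the explicit homogeneous polynomial of degree $d = \sqrt n = 2^k$ supplied by that work, which has exactly the two properties I need: it is computable by an efficient ($\poly$-size, $O(\log d)$-depth) algebraic circuit, yet every algebraic \emph{formula} (arithmetic tree) of depth $o(\log d)$ computing it has size $d^{\omega(1)}$. Since $\log d = \tfrac12 \log n$, the depth regime $o(\log d)$ coincides with $o(\log n)$, and the size lower bound $d^{\omega(1)} = (\sqrt n)^{\omega(1)} = n^{\omega(1)}$ is precisely the bound we want; the choice $n = 2^{2k}$ is made so that the degree of $Q$ lands in the parameter window where this lower bound is super-polynomial at depth $o(\log n)$.

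The first step is to realize $Q$ as a genuine network polynomial. A decomposable and smooth PC of degree $n$ computes a set-multilinear polynomial in the indicators $\{x_i,\bar x_i\}$, so I will embed $Q$ on a distinguished block of the variable-pairs and \emph{pad} the remaining pairs with an easy product of single indicators, defining $P := Q \cdot R$ where $R$ (for instance a product of positive indicators over the padding block) raises the degree from $d$ to $n$. By \Cref{smooth-homo-equivalence}, verifying that $P$ is a legal (smooth) PC reduces to verifying homogeneity, which holds by construction. This also yields the efficiency witness required in the statement: composing the balanced circuit for $Q$ with a balanced binary product tree for $R$ gives a DAG-structured PC for $P$ of depth $O(\log n)$ and size $O(n\log n)$, certifying that $P$ is not intrinsically hard.

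For the lower bound itself, suppose toward a contradiction that some tree PC $T$ of depth $o(\log n)$ computes $P$ with size $s = n^{O(1)}$. I would first observe that a tree PC is literally a monotone arithmetic formula: its edge weights are nonnegative scalars and its leaves are the indicators, so the coefficient of every parse tree is nonnegative in the sense of the monotonicity definition. Next I apply the substitution $\sigma$ setting every padding indicator so that $R \equiv 1$; as a polynomial identity $P|_\sigma = Q$, while at the level of the tree this substitution only replaces some leaves by constants and then absorbs them, an operation that removes leaves without increasing either the size or the depth. The result is a monotone arithmetic formula $T' = T|_\sigma$ of depth $o(\log n) = o(\log d)$ computing $Q$, with $|T'| \le s$. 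Applying \citet{fournier2023} to $T'$ then forces $|T'| \ge d^{\omega(1)} = n^{\omega(1)}$, whence $s \ge n^{\omega(1)}$, contradicting $s = n^{O(1)}$ and proving the theorem.

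I expect the main obstacle to be the embedding of the second paragraph together with the leaf-deletion reduction of the third: I must design $P$ so that it simultaneously (i) is smooth and decomposable, (ii) retains an $O(n\log n)$-size, $O(\log n)$-depth DAG, and (iii) degenerates to exactly $Q$ under a substitution that provably deletes only leaves, so that the reduction can never enlarge the tree and the inherited depth bound transfers cleanly. Additional care is needed to match the two notions of depth (product-depth versus total path length) and to confirm that the monotone, pair-structured formula $T'$ obtained from $T$ is of the precise form to which the lower bound of \citet{fournier2023} applies verbatim.
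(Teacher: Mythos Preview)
Your reduction strategy matches the paper's: build $P$ so that a leaf-level operation on any tree PC for $P$ yields a monotone arithmetic formula for Fournier's hard polynomial $Q=H^{(k,2)}$, then invoke \Cref{2023}. The gap is in your construction of $P$. The proposal $P := Q \cdot R$ with $R$ a fixed product over a ``padding block'' cannot be a smooth, decomposable network polynomial of degree $n$. The Fournier polynomial $H^{(k,2)}$ already lives on \emph{all} $n=2^{2k}$ variables (the gap between its degree $\sqrt n$ and its variable count $n$ is exactly what places it in the regime where \Cref{2023} bites at depth $o(\log n)$), so there is no disjoint block left to pad with. Even ignoring that, distinct monomials of $H^{(k,2)}$ are supported on \emph{different} $\sqrt n$-subsets of the variables, so no single monomial $R$ can complete every monomial of $Q\cdot R$ to one containing exactly one indicator from each pair $\{x_i,\bar x_i\}$, which is what smoothness plus decomposability force on a degree-$n$ network polynomial. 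Equivalently, if you place $Q$ on a block of $m$ pairs and take $R$ over the remaining $n-m$, then $\deg(Q\cdot R)=n$ forces $\deg Q=m$, contradicting $\deg Q\le\sqrt m$.

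The paper's fix is to pad \emph{per monomial}: multiply each monomial of $H^{(k,2)}$ by the negation indicators of precisely the variables it omits. This is carried out structurally in Algorithm~\ref{algo-lower-bound}, which takes the natural depth-$2k$ formula for $H^{(k,2)}$ and at every product layer attaches negation-indicator leaves so that sibling product nodes acquire identical scopes; \Cref{dc} then verifies smoothness and decomposability. With this $P$, your substitution step becomes ``set every $\bar x_i \mapsto 1$,'' which indeed recovers $H^{(k,2)}$ (\Cref{reduction}), and the rest of your argument goes through. Incidentally, framing the reduction as a substitution is slightly cleaner than the paper's ``delete all negation leaves,'' since deletion requires an auxiliary structural lemma (\Cref{structure-of-Pi}) ruling out sum nodes with a negation-leaf child and product nodes all of whose children are negation leaves, whereas substitution automatically preserves the polynomial computed.
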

% To prove this theorem, we construct a polynomial $P$ recursively, inspired by \cite{fournier2023}. Given an integer $k \geq 1$, the polynomial contains $2^{2k+1}$ indicators; those $2^{2k}$ non-negation indicators belong to the set $\left\{ x_{a_1 \cdots a_k, a_{k+1} \cdots a_{2k}} \right\}$, where each $a_i$ is a binary value for any $i \in [2k]$.
%To prove this theorem, we construct a polynomial $P$ by constructing a concrete tree PC, denoted by $P^*$, with size $O(n \log n)$ and depth $2k = 2 \log n$ that computes $P$, where $k$ is the integer given in \Cref{lower-bound}. The construction is shown in Algorithm~\ref{algo-lower-bound}. Before counting on its size, we must first ensure that the tree $P^*$ is a valid PC.
Note that if the polynomial $P$ is innately difficult to be represented by PCs, i.e., if it cannot even be represented efficiently by DAG-structured PCs, then separation is not shown. To show separation, $P$ should be efficiently computed by a DAG-structured PC, but any tree-structured PC representing $P$ must have a strictly larger size. Our next construction, described with more details in Algorithm~\ref{algo-lower-bound}, confirms a separation by constructing an efficient DAG-structured PC $P^*$ that computes $P$. This PC has size $O(n \log n)$ and depth $2k = 2 \log n$, where $k$ is the integer given in \Cref{lower-bound}. The next proposition confirms the validity of $P^*$, and the proof is in Appendix~\ref{appendix-proof-lower-bound}.

\begin{algorithm}[tb] \label{algo-lower-bound}
    %\begin{algorithmic}[1]
    \caption{Construction of $P^*$, an efficient PC for $P$ without a depth constraint}
    \KwData{A positive integer $k$, the number $2^{2k}$, a set $\left \{ x_1, \cdots, x_{2^{2k}}, \bar{x}_1, \cdots, \bar{x}_{2^{2k}} \right \}$ of $2^{2k+1}$ indicators}
    \KwResult{A tree PC of size $O(n \log n)$ and depth $2k = 2 \log n$}
    $j \gets 0$

    Place all non-negation indicators $x_1, \cdots, x_{2^{2k}}$ at the bottom layer;

    Label them as $L_{0,1}, \cdots, L_{0, 2^{2k}}$.

    \For{$i = 1, \cdots, 2 \log n$}{
        \If{$i$ is odd}{
            \While{$j < 2^{2k-i}$}{
                Construct a product node labelled by $L_{i, (j/2)}$ and two outgoing edges from the new node to $L_{i-1, j-1}$ and $L_{i-1, j}$;

                $j \gets j + 2$;
            }
            \For{every odd integer $q = 1, 3, \cdots, 2^{2k-i}-1$}{
                Add the leaves representing negation indicators $\{ x_z \}$ for all $z \in \scope(L_{i, q+1})$ as children of $L_{i,q}$;

                Add the leaves representing negation indicators $\{ x_z \}$ for all $z \in \scope(L_{i, q})$ as children of $L_{i, q+1}$;
            }
        }
        \If{$i$ is even}{
            \While{$j < 2^{2k-i}$}{
                Construct a sum node labelled by $L_{i, (j/2)}$ and two outgoing edges from the new node to $L_{i-1, j-1}$ and $L_{i-1, j}$;

                $j \gets j + 2$;
            }
        }
    }
    %\end{algorithmic}
\end{algorithm}

\begin{restatable}{proposition}{dc} \label{dc}
    The tree PC $P^*$ is decomposable and smooth.
\end{restatable}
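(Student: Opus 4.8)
The plan is to prove both properties simultaneously by induction on the layer index $i$ of Algorithm~\ref{algo-lower-bound}, carrying along an exact description of each node's scope. The invariant I would maintain is: at every even layer $i=2j$ the sum nodes partition the variables into consecutive blocks of size $4^j$, so that $\scope(L_{2j,p}) = \{X_{(p-1)4^j+1},\ldots,X_{p\cdot 4^j}\}$; and at every odd layer $i=2j-1$, after the negation step, the product nodes occur in consecutive pairs $(L_{2j-1,2r-1},L_{2j-1,2r})$ whose two members have the common scope $\{X_{(r-1)4^j+1},\ldots,X_{r\cdot 4^j}\}$. The base case $i=0$ is immediate, as $\scope(L_{0,p})=\{X_p\}$ corresponds to blocks of size $4^0=1$.

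Granting the invariant, \emph{smoothness} is almost free: every sum node lives at some even layer $2j$ and has children $L_{2j-1,2p-1}$ and $L_{2j-1,2p}$. Since $2p-1$ is odd and $(2p-1)+1=2p$, this child pair is exactly one of the pairs $(q,q+1)$ with $q$ odd that the negation loop processes, so by the invariant the two children share a common scope and the sum node is smooth. I expect the main obstacle to be precisely this matching of two different pairings — the $(q,q+1)$ grouping used when adding negation leaves versus the $(2p-1,2p)$ grouping of children under a sum node — since it is a pure index-bookkeeping argument where an off-by-one slip is easy; making the identity $q=2p-1$ explicit is what rules this out.

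For \emph{decomposability} I would examine each product node $L_{2j-1,q}$, whose children are the two binary-tree children $L_{2j-2,2q-1},L_{2j-2,2q}$ together with the negation leaves $\{\bar{x}_z : z\in\scope(L_{2j-1,q+1})\}$ added in the negation step (using the sibling's pre-negation scope). By the even-layer invariant the two binary-tree children occupy adjacent, hence disjoint, blocks of size $4^{j-1}$, whose union is the node's pre-negation block; the negation leaves have pairwise disjoint singleton scopes $\{X_z\}$, and the index set of these $z$ is the sibling's pre-negation block, which is the neighboring block and therefore disjoint from the node's own. Thus all children scopes are pairwise disjoint and $L_{2j-1,q}$ is decomposable. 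Because $\bar{x}_z$ carries the same scope $\{X_z\}$ as $x_z$, this step simultaneously shows that each paired product node attains the doubled scope of size $4^j$ claimed in the invariant, and an analogous count shows the sum node at layer $2j$ inherits it, which closes the induction.

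It remains only to observe that the induction also tracks the degree (degree $4^j$ at both layers $2j-1$ and $2j$), so the root is a smooth sum node computing a homogeneous degree-$n=4^k$ polynomial over all $n$ variables; as a sanity check one could instead deduce smoothness from decomposability via Lemma~\ref{smooth-homo-equivalence} after verifying this homogeneity, but the direct scope argument above is self-contained.
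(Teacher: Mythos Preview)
Your proposal is correct. Both you and the paper argue by induction on the layer index, but the packaging differs. The paper first proves a standalone characterization (Lemma~\ref{validity}): $P^*$ is valid if and only if, at every product layer, siblings share the same scope and non-siblings have disjoint scopes; it then verifies this sibling/non-sibling property layer by layer without ever writing down a closed form for the scopes. You instead maintain an explicit block formula $\scope(L_{2j,p})=\{X_{(p-1)4^j+1},\ldots,X_{p\cdot 4^j}\}$ and read off smoothness and decomposability directly from the block arithmetic. Your route is more elementary and self-contained; the paper's route isolates a reusable structural criterion that does not depend on the particular consecutive-block layout of $P^*$, at the cost of the extra equivalence lemma (whose converse direction is itself nontrivial). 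One small point worth making explicit in your write-up, which you note parenthetically but should pin down: the algorithm's negation step reads the sibling's scope \emph{before} any augmentation at that layer, so both members of the pair receive negation leaves indexed by the other's pre-negation block; this is exactly the interpretation the paper uses in its base case and is what makes the doubled-block invariant go through symmetrically.
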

% \begin{proposition} \label{validity}
%     The tree PC $P^*$ is decomposable and smooth.
% \end{proposition}
%Before writing the rigorous proof, we first fix some terminologies. In this proof, we refer the layer of all indicators constructed in step one as layer zero, and each set of nodes constructed in one of steps three, four, and five as one layer above. A negation indicator added in step six does not belong to any layer. Therefore, when we consider a layer of sum nodes, the negation indicator leaves whose parents are product nodes on the next layer are not in consideration. Step six augments the scope of every product node; for any product node $v$, we use $v'$ to denote the same vertex before being augmented during step six. To prove this statement, we show that $P^*$ satisfies the following equivalent property.
%\begin{restatable}{lemma}{validity} \label{validity}
%    Validity of $P^*$ is equivalent with the following statement:
%    \begin{center}
%        In $P^*$, every product node and its sibling have the same scope. If two product nodes are on the same layer but not siblings, then they have disjoint scopes.
%    \end{center}
%\end{restatable}
% \begin{lemma} \lang{move to appendix}
%     Validity of $P^*$ is equivalent with the following statement:
%     \begin{center}
%         In $P^*$, every product node and its sibling have the same scope. If two product nodes are on the same layer but not siblings, then they have disjoint scopes.
%     \end{center}
% \end{lemma}

It is easy to check that $P^*$ has the correct size and depth as described earlier. Before adding leaf nodes, the algorithm in total constructs $\sum_{r=0}^{2k} 2^r = 2^{2k+1}-1 = 2n-1$ nodes. Finally, observe that during the construction of leaf nodes, each negation indicator is added exactly $k$ times: At a layer containing only product nodes, if a negation indicator is added to a product node $v$ at this layer, then it will next be added to the sibling of the grandparent of $v$. Because each product node has exactly one sibling, the negation indicator for a random variable is duplicated exactly $k$ times, and finally the total size is $2n-1+kn = O(kn) = O(n \log n)$. The depth $O(k)$ is also apparent from the algorithm. We therefore conclude that $P$ can be efficiently computed by a polynomial sized tree PC for an unrestricted depth.

%Observe that, without step six, the tree computes the polynomial $H^{(k,2)}$ constructed in \cite{fournier2023}, which lies in the ring $\mathbb{R}[x_1, \cdots, x_{2^{2k}}]$, but only has the degree of $2^k = \sqrt{2^{2k}} = \sqrt{n}$.

%The construction confirms that $P$ can be efficiently computed by a polynomial sized tree PC for an unrestricted depth; in this case, the depth is $O(k)$. 

However, the efficiency would be less optimal if we restrict the depth to $o(k)$. To show this, we design a reduction from our problem for PCs to a well-studied problem on arithmetic circuits. Our proof essentially argues that, for any minimum-sized tree-structured PC that computes $P$, we can obtain its sub-tree that computes a polynomial, and that polynomial has been recently proven to not be able to be represented by a polynomial-sized tree-structured PC. This recent result is stated below.%To show this, we will need the following result shown in \cite{fournier2023}. The polynomial $H^{(k,2)}$ is a special case of the polynomial constructed in that paper to prove the following theorem.
\begin{theorem}[\cite{fournier2023}]
    \label{2023}
    Let $n$ and $d=d(n)$ be growing parameters such that $d(n) \leq \sqrt{n}$. Then there is a monotone algebraic formula $F$ of size at most $n$ and depth $O(\log d)$ computing a polynomial $Q \in \mathbb{F}[x_1, \cdots, x_n]$ of degree at most $d$ such that any monotone formula $F'$ of depth $o(\log d)$ computing $Q$ must have size $n^{\omega(1)}$.
\end{theorem}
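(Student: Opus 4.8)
The plan is to establish both directions for an explicit polynomial $Q$, working throughout in the \emph{monotone} (no-cancellation) regime, where the support of a polynomial — its set of monomials — is the governing invariant. For the upper bound I would take $Q$ to be a set-multilinear polynomial on $d$ disjoint blocks of variables $Y_1, \dots, Y_d$, each of width $w = n/d \ge \sqrt{n}$, defined recursively by a complete alternating sum/product tree of arity two: pairs of block-polynomials are combined by product gates, partial results are summed, and this repeats for $\log d$ levels. This yields a monotone formula of product-depth $O(\log d)$ and size $O(n)$ computing a homogeneous degree-$d$ polynomial whose support $M \subseteq Y_1 \times \cdots \times Y_d$ I am free to prescribe through the choice of leaf monomials, with all coefficients nonnegative so monotonicity is automatic.

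For the lower bound, the crucial feature of monotone formulas is the absence of cancellation: $\mathrm{supp}(g+h) = \mathrm{supp}(g) \cup \mathrm{supp}(h)$ and $\mathrm{supp}(g \cdot h) = \mathrm{supp}(g) \cdot \mathrm{supp}(h)$. Thus any monotone formula induces a \emph{cover} of $M$ by combinatorial rectangles, and one may first homogenize and set-multilinearize the formula for free (monotone homogenization incurs no size blow-up). The heart of the argument is a structural lemma: a set-multilinear monotone formula of product-depth $\Delta$ and size $s$ can be rewritten as $Q = \sum_{i=1}^{s'} \prod_j Q_{ij}$ with $s' \le \poly(s)$ and with the factor-degrees inside each product forced to be \textbf{$\Delta$-balanced} — decreasing in a controlled geometric progression dictated by $\Delta$. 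Passing to supports, this exhibits $M$ as a union of at most $s'$ product sets whose shape is constrained by the depth.

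To convert this into a size bound I would fix a complexity measure and compare its value on $Q$ against the maximum attainable by a single balanced product. Following the random-restriction / relative-rank method, I would draw a uniformly random balanced partition of the blocks into halves $A \sqcup B$ and take the measure of a set-multilinear polynomial to be a normalized rank of its coefficient matrix with rows indexed by $A$-monomials and columns by $B$-monomials — which, in the monotone case, is controlled by the rectangle-cover number of its support under this split. The measure is subadditive under sums, and for a $\Delta$-balanced product its expectation over the random partition is small, because with high probability the factors are split across $A$ and $B$ so that the coefficient matrix is low-rank. Summing the per-term bound over the $s'$ terms yields a ceiling of the form $s' \cdot d^{-\Omega(d^{1/\Delta})}$ times the trivial maximum, whereas $M$ is designed via a combinatorial spread / design condition to force $Q$ to have near-maximal measure. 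Rearranging gives $s \ge d^{\Omega(d^{1/\Delta})}$, which is superpolynomial, i.e. $n^{\omega(1)}$, precisely when $\Delta = o(\log d)$, while $\Delta = \Theta(\log d)$ drives the exponent down to $O(1)$ and is matched by the explicit formula above.

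The main obstacle I anticipate is the quantitatively tight structural lemma: translating a depth-$\Delta$ formula into a sum of genuinely $\Delta$-balanced products, and then proving the per-product measure bound $d^{-\Omega(d^{1/\Delta})}$ with an exponent that is \emph{simultaneously} superpolynomial at $\Delta = o(\log d)$ and trivial at $\Delta = \Theta(\log d)$. Tracking how product-depth controls the degree-balance, and how the random partition interacts with every possible product shape, is the delicate part; the design ensuring $M$ resists thin balanced rectangles is a secondary but essential ingredient. A further technical point is verifying that monotone set-multilinearization and homogenization are truly free of size blow-up, so that the lower bound on the structured formula transfers back to the original one.
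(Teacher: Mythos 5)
You should first note the framing: the paper does not prove this theorem at all. It is imported as a black box from \cite{fournier2023}, and the only ingredient reproduced in the paper is the explicit hard polynomial $Q = H^{(k,2)}$, defined by the recursion $H_{u,v} = \sum_{a=1}^{r} H_{u1,va}\,H_{u2,va}$ with $H_{u,v} = x_{u,v}$ at the base. So the comparison has to be with the cited source. At the level of strategy your outline is of the right genre and broadly matches it: a balanced alternating sum/product tree for the size-$n$, depth-$O(\log d)$ upper bound, and, for the lower bound, a structural decomposition of a product-depth-$\Delta$ monotone formula into a sum of degree-balanced products, followed by a counting argument showing each balanced product accounts for at most a $d^{-\Omega(d^{1/\Delta})}$ fraction of the target --- exactly the shape of bound that is superpolynomial precisely when $\Delta = o(\log d)$. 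In the monotone regime your observation that supports compose without cancellation, so that a formula induces a cover of $\mathrm{supp}(Q)$ by product sets, is also the correct governing principle.

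There are, however, genuine gaps. The most concrete is your claim that you are ``free to prescribe'' the support $M$ through a spread/design condition. You are not: $M$ must simultaneously defeat every cover by few balanced product sets \emph{and} be the support of a polynomial computed by a monotone formula of size $n$ and depth $O(\log d)$ --- and the latter forces $M$ itself to be a small, depth-$O(\log d)$ union of product sets. A generic spread design will not admit such a formula, and an arbitrary choice of leaf monomials in your alternating tree does not yield spreadness. In \cite{fournier2023} this tension is resolved not by a design but by the specific interleaving in $H^{(k,r)}$ (the shared index $a$ coupling $H_{u1,va}$ with $H_{u2,va}$), and the per-product bound is proven against that concrete structure; this is the heart of the proof and is absent from your sketch. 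Second, your two pivotal lemmas --- the $\poly(s)$-size rewriting into $\Delta$-balanced products with controlled factor degrees, and the per-term bound $d^{-\Omega(d^{1/\Delta})}$ that must be superpolynomial at $\Delta = o(\log d)$ yet trivialize at $\Delta = \Theta(\log d)$ --- are asserted rather than proven, and you flag them yourself as the hard part; as written this is a research plan, not a proof. Finally, the random-partition relative-rank machinery you import from the non-monotone set-multilinear lower bounds (in the style of Limaye--Srinivasan--Tavenas) is an unnecessary detour here: with no cancellation one argues directly with supports and product-set covers, and keeping rank in the picture adds proof obligations (relating rank to cover number under every split) that your sketch does not discharge.
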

%\lang{I plan to show the reduction below, and it is essential to introduce the polynomial $H^{(k,2)}$. The construction of $H^{(k,2)}$ takes about 1/3 of a page and it might be more space-efficient to move the construction to the Appendix. However, this is a crucial part to confirm the correctness of our proof, and reviewers might want to see it in the main part.}

The proof of the lower bound for PCs in \Cref{lower-bound} is to show that, for any $\Pi$, a minimum tree-structured PC with depth $o(k)$ that computes $P$, the polynomial in the statement of \Cref{lower-bound}, we can always obtain a smaller-sized arithmetic formula $\Pi'$ with the same depth that computes the polynomial $Q$ in the statement of \Cref{2023}. The size of $\Pi'$ is super-polynomial due to \Cref{2023}, and as a result, the size of $\Pi$ cannot be smaller. In other words, our proof involves a reduction from the PC problem to the AC problem. Before introducing the reduction, we first present the polynomial $Q$ in the statement of \Cref{2023}. The original construction in \citet{fournier2023} is for the general class, but over here, we only present a specific case with $r=2$, which is sufficient for our purpose.

\paragraph{The Construction of the Polynomial $Q$}
We denote the polynomial $Q$ by $H^{(k,2)}$, which is defined over $2^{2k}$ variables
\begin{equation}
    \left \{ x_{\sigma, \tau}: \sigma, \tau \in [2]^k \right \}.
\end{equation}
The polynomial $H^{(k,2)}$ is recursively defined over intermediate polynomials $H_{u,v}$ for all $(u,v) \in [2]^{\leq k} \times [2]^{\leq k}$ and $|u| = |v|$. Specifically, if $|u| = |v| = k$, then $H_{u,v} = x_{u,v}$; otherwise, $H_{u,v} = \sum_{a=1}^{r} H_{u1,va} H_{u2,va}$. The final polynomial $H^{(k,2)}$ is defined to be $H_{\emptyset,\emptyset}$. Observe that the degree of $H^{(k,2)}$ is $2^k$, and it contains $2^{2^k-1}$ monomials.

Given a minimum tree-structured PC $\Pi$, which computes $P$ and is of depth $o(k)$, we remove all of its leaves that represent negation variables and call this pruned network $\Pi'$; without leaves representing negation variables, $\Pi'$ is just a standard arithmetic formula. Clearly, $|\Pi'| \leq |\Pi|$, and the next proposition reveals the polynomial computed by $\Pi'$, and its proof is in Appendix~\ref{appendix-proof-lower-bound}.

\begin{restatable}{proposition}{reduction} \label{reduction}
    The arithmetic formula $\Pi'$ computes $H^{(k,2)}$.
\end{restatable}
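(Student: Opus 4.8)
The plan is to show that the arithmetic formula $\Pi'$, obtained from the minimum tree PC $\Pi$ by deleting all leaves labelled by negation indicators $\bar{x}_i$, computes exactly the polynomial $H^{(k,2)}$. The overarching strategy is to exhibit a correspondence between the recursive structure of $P^*$ (the explicit DAG-structured PC of Algorithm \ref{algo-lower-bound}) and the recursive definition of $H^{(k,2)}$, and then to argue that this correspondence is forced on \emph{any} tree PC computing $P$, in particular on $\Pi$. First I would make precise the relationship between the polynomial $P$ computed by $P^*$ and the target polynomial $H^{(k,2)}$: the construction in Algorithm \ref{algo-lower-bound} alternates a product layer (odd $i$) that multiplies two subcircuits and simultaneously attaches negation indicators, with a sum layer (even $i$) that adds two subcircuits. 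Stripping every $\bar{x}_i$ leaf removes precisely the negation factors introduced at the odd layers, and what remains is a product/sum alternation over the non-negation indicators $x_{\sigma,\tau}$ whose index structure matches the pairs $(u,v)\in[2]^{\le k}\times[2]^{\le k}$ with $|u|=|v|$. I would set up the identification between the layer-$i$ labels $L_{i,\cdot}$ and the intermediate polynomials $H_{u,v}$, verifying the base case ($|u|=|v|=k$ gives the leaf $x_{u,v}$) and the recursive step ($H_{u,v}=\sum_{a=1}^{2}H_{u1,va}H_{u2,va}$ matches one sum-of-two-products block of $P^*$ after pruning).

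The heart of the argument is that this identification is not merely a feature of the particular circuit $P^*$, but a property of the network polynomial $P$ itself, and hence of \emph{any} tree PC computing it. Here I would invoke decomposability and smoothness of $\Pi$ (it is a valid PC) together with \Cref{variable} and \Cref{smooth-homo-equivalence}: because $P$ is homogeneous of the appropriate degree, every node of $\Pi$ computes a homogeneous polynomial with a well-defined scope, so deleting the negation leaves is a clean operation on the polynomial level rather than a structural accident. Concretely, I would argue that setting every $\bar{x}_i \mapsto 0$ in the network polynomial $P$ yields $H^{(k,2)}$ — this is the algebraic content of pruning those leaves — and that performing this substitution commutes with the tree structure: at each sum node the substitution distributes over the weighted sum, and at each product node it distributes over the factors, with any factor that becomes a pure product of negation indicators vanishing. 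Thus $f_{\Pi'}$ equals $P$ evaluated at $\bar{x}_i = 0$, which I will have shown equals $H^{(k,2)}$.

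I expect the main obstacle to be bookkeeping the effect of pruning at the \emph{product} nodes, since a product node in $\Pi$ may have one child whose subtree contributes only negation indicators and another contributing the genuine $x_{\sigma,\tau}$ monomials; I must verify that deleting the former leaf leaves a well-formed formula computing the surviving factor, and that no cross-terms are lost or spuriously created. The smoothness and decomposability of $\Pi$ are exactly what guarantee this: decomposability ensures the scopes of the two children are disjoint, so the negation factor and the surviving factor live on separate variables and pruning one does not entangle with the other, while the degree bookkeeping of \Cref{variable} ensures the homogeneity needed for \Cref{smooth-homo-equivalence} to apply. A secondary subtlety is confirming that the substitution $\bar{x}_i = 0$ genuinely recovers $H^{(k,2)}$ and not some coarser or finer polynomial; this requires tracing, through the recursive definition of $H_{u,v}$, that each surviving monomial of $P$ corresponds to exactly one parse tree contributing a monomial of $H^{(k,2)}$ with matching coefficient, which is where the explicit index encoding $(\sigma,\tau)\in[2]^k\times[2]^k$ does the real work.
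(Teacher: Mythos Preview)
Your proposal contains a genuine error in the algebraic semantics of leaf deletion. You claim that ``setting every $\bar{x}_i \mapsto 0$ in the network polynomial $P$ yields $H^{(k,2)}$'' and that this is ``the algebraic content of pruning those leaves.'' Both assertions are false. Since $\Pi$ is a smooth, decomposable PC over $n=2^{2k}$ variables, every monomial of $P$ has degree $n$ and contains exactly one indicator (either $x_i$ or $\bar{x}_i$) for each variable $X_i$. By the construction of $P^*$, each monomial carries $2^k$ non-negation indicators and $2^{2k}-2^k$ negation indicators; hence substituting $\bar{x}_i=0$ for all $i$ annihilates every monomial and yields $P=0$, not $H^{(k,2)}$. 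Moreover, deleting a leaf that is a child of a \emph{product} node is equivalent to replacing that leaf by the multiplicative identity $1$, not by $0$; the substitution $\bar{x}_i\mapsto 0$ would, at any product node with a negation-only child, kill the entire product rather than leave the ``surviving factor'' you hope to recover.

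The correct substitution is $\bar{x}_i\mapsto 1$, and this is what the paper uses. But now a new structural issue arises, which your proposal does not address: leaf deletion corresponds to $\bar{x}_i\mapsto 1$ only at product gates and to $\bar{x}_i\mapsto 0$ at sum gates, so one must first rule out that any sum node of $\Pi$ has a negation leaf as a child, and that any product node has \emph{only} negation leaves as children (else removal would leave an empty product). The paper isolates this as \Cref{structure-of-Pi} and proves it via a different device than the ones you invoke: it observes that after replacing all $\bar{x}_i$ by $1$ the formula is still monotone and computes the homogeneous polynomial $H^{(k,2)}$, and then applies \Cref{monotone-formula-homogeneous} (a monotone formula computing a homogeneous polynomial is itself homogeneous) to derive a contradiction from either bad configuration. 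Your appeals to \Cref{variable} (a statement about partial derivatives, irrelevant here) and to decomposability alone do not supply this; smoothness of $\Pi$ tells you siblings share a scope, but does not by itself forbid a sum node from having the leaf $\bar{x}_i$ and the leaf $x_i$ as its two children.
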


%To confirm this statement, we need the following observation on trees that compute homogeneous polynomials. Its corollaries further down will be an ingredient to prove Proposition \ref{reduction}.
%\lang{Proposition~\ref{reduction} is indeed an essential part and in my opinion, should certainly be left in the main part. However, Lemma 19 and Corollary 20 are only to serve the proof of Proposition~\ref{reduction}. Would it make sense to move them into the subsection for Proof of Proposition~\ref{reduction}, and in the main part, we only show the outcome of the reduction?}
%\begin{restatable}{lemma}{mfh}
%    \label{monotone-formula-homogeneous}
%    A monotone arithmetic formula computing a homogeneous polynomial must be homogeneous.
%\end{restatable}
% \begin{lemma} \label{monotone-formula-homogeneous} \lang{move to appendix}
%     A monotone arithmetic formula computing a homogeneous polynomial must be homogeneous.
% \end{lemma}
%Using this fact, we can now reveal some critical information of $\Pi$, even if we did not explicitly construct it.
%\begin{restatable}{corollary}{spi}
%    \label{structure-of-Pi}
%    In the tree PC $\Pi$, no sum node has a negation indicator as a child, and no product node has only negation indicators as its children.
%\end{restatable}
% \begin{corollary} \label{structure-of-Pi} \lang{move to appendix}
%     In the tree PC $\Pi$, no sum node has a negation indicator as a child, and no product node has only negation indicators as its children.
% \end{corollary}

Having all the necessary ingredients, we are now ready to conclude this section by proving \Cref{lower-bound}, the main result of this section.

\begin{proof} [Proof of \Cref{lower-bound}]
    The proof of \Cref{2023} in \citet{fournier2023} uses the polynomial class $H^{(k,r)}$ as the hard polynomial $Q$ in the statement, in particular, with $r=2$, $n=2^{2k}$ and $d(n) = \sqrt{n} = 2^k$. Note that the depth of $\Pi'$ is $o(\log d) = o(k)$, and the degree of $H^{(k,2)}$ is $d = 2^k$, so the conditions in the statement of \Cref{lower-bound} are indeed satisfied. Since $\Pi'$ is obtained from $\Pi$ by removing leaves, we obtain the following inequality that concludes the proof:
    \begin{equation*}
        |\Pi| \geq |\Pi'| \geq n^{\omega(1)}.\qedhere
    \end{equation*}
\end{proof}
% end of section

\section{Conclusion}
In this paper we have shown that given a network polynomial with $n$ variables that can be efficiently computed by a DAG-structured PC, we can construct a
%  gap between the circuit complexity and tree complexity, and the 
tree PC with at most quasi-polynomial size and is no deeper than $O(\log n)$.
% Later, in section 4, 
On the flip side, we have also shown that there indeed exists a polynomial that can be efficiently computed by a $\poly(n)$-sized PC without a depth restriction, but there is a super-polynomial separation if we restrict the depth of the tree to be $o(\log n)$. Our results make an important step towards understanding the expressive power of tree-structured PCs and show that an quasi-polynomial upper bound is possible. However, the lower bound is still largely open, and we have only shown a separation under a specific depth restriction.
% In short, this problem is still largely open, and 
One potential direction for the future work are discussed below: although the upper bound $n^{O(\log n)}$ is quasi-polynomial, it is still prohibitively large as $n$ grows. The construction outputs a tree of depth $O(\log n)$, which would be considered as a shallow tree. Is it possible to further reduce the size of the tree, possibly in the cost of a larger depth?
% Both directions are fundamentally asking for the tightness of an upper or lower bound.
% \begin{enumerate}
%     \item Although the upper bound $n^{O(\log n)}$ is quasi-polynomial, it is still prohibitively large as $n$ grows. The construction outputs a tree of depth $O(\log n)$, which would be considered as a shallow tree. Is it possible to further reduce the size of the tree, possibly in the cost of a larger depth?
%     \item Of course, it would be interesting if the answer to the previous question is no, i.e. our upper bound is tight. If this is the case, then could we find a polynomial that has an efficient circuit complexity, but a minimum tree computing it must have size $n^{\Omega(\log n)}$? The most optimal solution would be an unconditional lower bound without any restriction on the depth of the tree. Or, could we at least loosen the depth restriction in \Cref{lower-bound} a bit to $O(\log n)$, and prove that a minimum tree under this restriction must have size $n^{O(\log n)}$? As discussed in Section 1, the difficult part might be confirming the efficient circuit complexity.
% \end{enumerate}

\section*{Acknowledgement}
HZ would like to thank the support from a Google Research Scholar Award. 

\bibliographystyle{plainnat}
\bibliography{ArXiv_version}

\newpage
\appendix

%\section{appendix} % uncomment if you are using cleveref

\section{Missing proofs in Section~\ref{sec:upperbound}}
\label{appendix-proofs-upper-bound}
In this section we provide the proofs of the lemmas and theorems that are not included in the main text. For better readability, we first restate the statements and then provide the proofs.
\subsection{Proof of Lemma~\ref{transform-binary}}

Given a depth-$D$ network with $V$ nodes and $E$ edges, we scan over all its nodes. If a sum node has more than two children, say $M_1, \cdots, M_k$, then keep $M_1$ and create a product node, whose only child is an intermediate sum node. The intermediate sum node has two children: $M_2$ and another just created intermediate sum node. Keep going and until an intermediate sum node has $M_k$ as the only child.

The operation is the same if a product node has more than two children by just exchanging sum and product. Note that for one operation for a node with $k$ children, the depth increases by $2(k-1)$, and $2(k-1)$ nodes and edges are added. Once we do the same for all nodes, the number of increased depth, nodes, and edges are upper bounded by
\begin{equation*}
    2 \times \left( \sum_{N \in V} \text{out-degree of node $N$ if $N$ has more than two children} \right) - 2V \leq 2E - 2V \in O(E).
\end{equation*}

In fact, for depth, this upper bound is very conservative because, for example, if a node has three children, one of its children again has three children. After we operate on both of them, the depth increases by four only. A better upper bound is $O(M) \leq O(V)$, where $M$ is the maximum out-degree in the original network. It is easy to check that each child of the root computes the same polynomial as before, and so does the new network. Clearly, the new network is still decomposable and smooth if the original network is.

\begin{figure}
    \centering
    \includegraphics[scale=0.8]{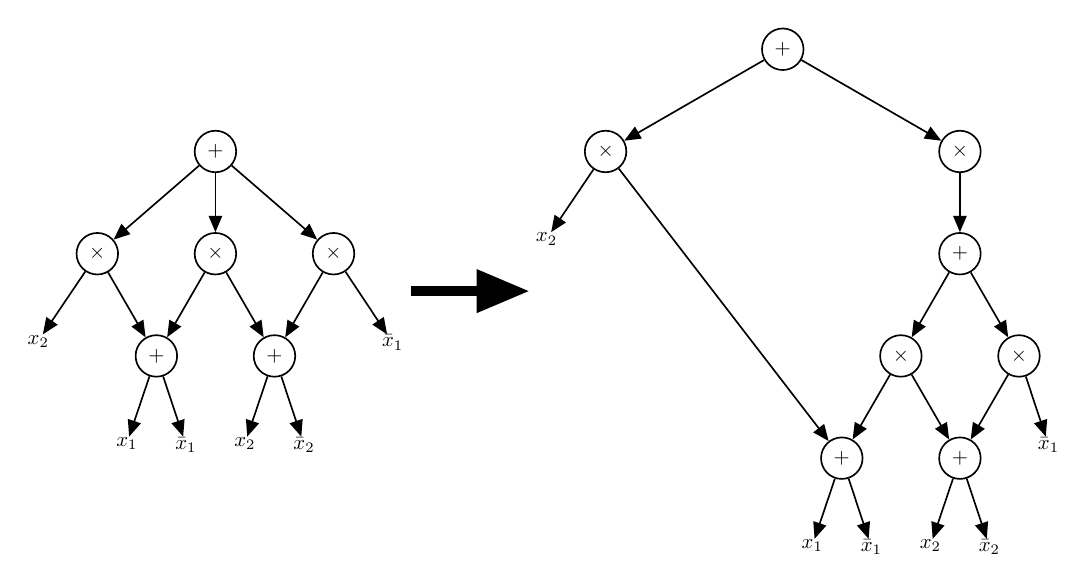}
    \caption{The process of transforming a non-binary DAG-structured PC to a binary one that computes the identical network polynomial. We omit the edge weights for simplicity.}
    \label{fig:enter-label}
\end{figure}

\subsection{Proof of Lemma~\ref{variable}}
\variable*
\begin{proof}
    Clearly, $\partial_w f_v \neq 0$ implies that $w$ is a descendant of $v$. We prove the statement by induction on $L$, the length of the longest directed path from $v$ to $w$. If $L=0$, i.e. $w=v$, then $\partial_w f_v = 1$ and the statement trivially holds. Suppose the statement is true for all $L$ and now the longest distance from $v$ to $w$ is $L+1$. We prove the statement by discussing two cases, whether $w$ is a sum or product node.
    %\han{To make the proof more structurally clear, use bullet points to discuss by cases, i.e., product node vs sum node.}
    \paragraph{Case I: $w$ is a sum node.} We first assume $w$ is a sum node, and its parent inside this particular path $v \leadsto w$ is $u$, whose children are $w$ and $w'$. We write $\overline{f}_v$ as the polynomial if we substitute $w$ with $y$, and $\widehat{f}_v$ as the polynomial if we substitute $u$ with $y$. Note that if we write them as functions with respect to $y$, then $\overline{f}_v(y) = \widehat{f}_v (y \cdot f_{w'})$, and hence
    \begin{equation}
        \partial_w f_v = \frac{\partial \overline{f}_v(y)}{\partial y} = \frac{\partial \widehat{f}_v (y \cdot f_{w'})}{\partial y} = \frac{\partial \widehat{f}_v(y \cdot f_{w'})}{\partial (y \cdot f_{w'})} \cdot f_{w'} = \partial_u f_v \cdot f_{w'}.
    \end{equation}
    By the inductive hypothesis, $\partial_u f_v$ is a homogeneous polynomial over the set of variables $X_v \setminus X_u$ of total degree $\deg(v) - \deg(u)$, so $\partial_w f_v$ must also be homogeneous, and its degree is $\deg(\partial_u f_v) + \deg(w') = \deg(v) - \deg(u) + \deg(w') = \deg(v) - \deg(w) - \deg(w') + \deg(w') = \deg(v) - \deg(w)$, and it is over variables $\left( X_v \setminus X_u \right) \cup X_{w'} = \left( X_v \setminus \left( X_w \sqcup X_{w'} \right) \right) \cup X_{w'} = X_v \setminus X_w$. %\han{The arithmetic regarding the degrees is true only if decomposability, right?} \lang{Yes, thanks. I added a specification in the statement that $\Phi$ is an SPN.}

    \paragraph{Case II: $w$ is a product node.} Next, assume $w$ is a product node. In this case, $u$ is a sum node and $\deg(u) = \deg(w) = \deg(w')$, and $X_u = X_w = X_{w'}$. Let the weight of the edge $u \rightarrow w$ be $a$, and the weight for $u \rightarrow w'$ be $b$. Then, $\overline{f}_v(y) = \widehat{f}_v(ay + b f_{w'})$, and
    \begin{equation}
        \partial_w f_v = \frac{\partial \overline{f}_v(y)}{\partial y} = \frac{\partial \widehat{f}_v(ay + b f_{w'})}{\partial y} = a \cdot \frac{\partial \widehat{f}_v(ay + b f_{w'})}{\partial (ay + b f_{w'}))} = a \cdot \partial_u f_v.
    \end{equation}
    Clearly, by the inductive hypothesis, both $\partial_u f_v$ and $\partial_w f_v$ are homogeneous, and they have the same degree and set of variables. Specifically, $\deg(\partial_w f_v) = \deg(\partial_u f_v) = \deg(v) - \deg(u) = \deg(v) - \deg(w)$, and $X_{w,v} = X_{u,v} = X_v \setminus X_u = X_v \setminus X_w$.
\end{proof}

\subsection{Proof of Lemma~\ref{partial-prodchild}}
\raz*
\begin{proof}
    Clearly, $\deg(v) = \deg(v_1) + \deg(v_2)$. Therefore, since $\deg(v) < 2 \deg(w)$, we have $\deg(v_2) < \deg(w)$; by Lemma \ref{variable}, we have $\partial_w f_{v_2} = 0$, and the conclusion follows directly because of the chain rule.
\end{proof}

\subsection{Proof of Lemma \ref{rep-poly}}

First, observe that with such choice of $m$, we have $\mathbf{G}_m \cap \Phi_v \neq \emptyset$. Write $v_1$ and $v_2$ as the children of $v$. If $\deg(v_1) \leq m$ and $\deg(v_2) \leq m$, then $v \in \mathbf{G}_m$. Otherwise, assume without loss of generality that $\deg(v_1) \geq \deg(v_2)$ and $\deg(v_1) > m$. Keep reducing and there will be a position such that the condition of being a member in $\mathbf{G}_m$ holds.

We now prove the statement by induction on $L$, the length of the longest directed path from $v$ to $\mathbf{G}_m$, i.e. $L = \max_{v' \in \mathbf{G}_m} \dist(v, v')$. If $L=0$, then $v \in \mathbf{G}_m$ and all other nodes in $\mathbf{G}_m$ (if any) are not descendants of $v$. Therefore, if $t \in \mathbf{G}_m$ and $t \neq v$, we have $\partial_t f_v = 0$. Clearly, $\partial_v f_v = 1$, so
\begin{equation}
    f_v = f_v \cdot \underbrace{\partial_v f_v}_{=1} + \sum_{t \in \mathbf{G}_m: t \neq v} f_t \cdot \underbrace{\partial_t f_v}_{=0} = \sum_{t \in \mathbf{G}_m} f_t \cdot \partial_t f_v.
\end{equation}

Now suppose the statement is true for all $L$, and now the longest directed path from $v$ to $\mathbf{G}_m$ has length $L+1$. %\han{Again, better to use bullet points to divide these two cases so that the high-level structure of the proof is clearer.} 

\paragraph{Case I: $v$ is a sum node.} First, assume $v$ is a sum node and $f_v = a_1 f_{v_1} + a_2 f_{v_2}$. Recall that, since $v$ is a sum node, we have $m < \deg(v_1) = \deg(v_2) = \deg(v) \leq 2m$, so we may apply the inductive hypothesis on $v_1$ and $v_2$. %\han{Before calling the inductive hypothesis, better to remind the readers that because of $v$ being a sum node, the degrees of the children are the same as $v$, so both $m < \deg(v_1)\leq 2m$ and $m < \deg(v_2)\leq 2m$ hold and you can apply the inductive hypothesis.}
Therefore,
\begin{equation}
    f_{v_1} = \sum_{t \in \mathbf{G}_m} f_t \cdot \partial_t f_{v_1}; \quad f_{v_2} = \sum_{t \in \mathbf{G}_m} f_t \cdot \partial_t f_{v_2}.
\end{equation}
Hence, using the chain rule of the partial derivative, we have
\begin{align}
    f_v = a_1 f_{v_1} + a_2 f_{v_2} & = \sum_{t \in \mathbf{G}_m} a_1 \cdot f_t \cdot \partial_t f_{v_1} + \sum_{t \in \mathbf{G}_m} a_2 \cdot f_t \cdot \partial_t f_{v_2}                                  \\
                                    & = \sum_{t \in \mathbf{G}_m} f_t \cdot \left( a_1 \cdot \partial_t f_{v_1} + a_2 \cdot \partial_t f_{v_2} \right) = \sum_{t \in \mathbf{G}_m} f_t \cdot \partial_t f_v.
\end{align}

\paragraph{Case II: $v$ is a product node.} Next, assume $v$ is a product node and $\deg(v_1) \geq \deg(v_2)$. If $v \in \mathbf{G}_m$, then the statement trivially holds like the base case, so we assume $v \notin \mathbf{G}_m$, and therefore $m < \deg(v_1) \leq 2m$ and the longest directed path from $v_1$ to $\mathbf{G}_m$ has length $L$, while such a path does not exist from $v_2$ to $\mathbf{G}_m$. So, by the inductive hypothesis,
\begin{equation}
    f_{v_1} = \sum_{t \in \mathbf{G}_m} f_t \cdot \partial_t f_{v_1}.
\end{equation}
By definition, if $t \in \mathbf{G}_m$, then we must have $2 \deg(t) > 2m \geq \deg(v)$, and by Lemma \ref{partial-prodchild},
\begin{equation}
    f_v = f_{v_1} \cdot f_{v_2} = \sum_{t \in \mathbf{G}_m} f_t \cdot \left( f_{v_2} \cdot \partial_t f_{v_1} \right) = \sum_{t \in \mathbf{G}_m} f_t \cdot \partial_t f_v.
\end{equation}

\subsection{Proof of Lemma \ref{rep-pd}}

We again write $v_1$ and $v_2$ as the children of $v$, and again induct on $L$, the length of the longest directed path from $v$ to $\mathbf{G}_m$ in the network. If $L=0$, then $v \in \mathbf{G}_m$, and same as the previous case, every other node $t$ in $\mathbf{G}_m$ is not a descendant of $v$, which implies $\partial_t f_v = 0$. So,
\begin{equation}
    \partial_w f_v = \partial_w f_v \cdot \underbrace{\partial_v f_v}_{=1} + \sum_{t \in \mathbf{G}_m: t \neq v} \partial_w f_v \cdot \underbrace{\partial_t f_v}_{=0} = \sum_{t \in \mathbf{G}_m} \partial_w f_t \cdot \partial_t f_v.
\end{equation}

Suppose the statement is true for all $L$, and now the longest directed path from $v$ to $\mathbf{G}_m$ has length $L+1$.

\paragraph{Case I: $v$ is a sum node.} First, assume $v$ is a sum node and $f_v = a_1 f_{v_1} + a_2 f_{v_2}$. Again, since $v$ is a sum node we may apply the inductive hypothesis on $v_1$ and $v_2$:
\begin{equation}
    \partial_w f_{v_1} = \sum_{t \in \mathbf{G}_m} \partial_w f_t \cdot \partial_t f_{v_1}; \quad \partial_w f_{v_2} = \sum_{t \in \mathbf{G}_m} \partial_w f_t \cdot \partial_t f_{v_2}.
\end{equation}
Again, by the chain rule, we have
\begin{equation}
    \partial_w f_v = a_1 \partial_w f_{v_1} + a_2 \partial_w f_{v_2} = \sum_{t \in \mathbf{G}_m} \partial_w f_t \cdot \left( a_1 \partial_t f_{v_1} + a_2 \partial_t f_{v_2} \right) = \sum_{t \in \mathbf{G}_m} \partial_w f_t \cdot \partial_t f_v.
\end{equation}

\paragraph{Case II: $v$ is a product node.} Now assume $v$ is a product node and $\deg(v_1) \geq \deg(v_2)$. If $v \in \mathbf{G}_m$, then the statement trivially holds like the base case, so we assume $v \notin \mathbf{G}_m$, and therefore $m < \deg(v_1) < 2 \deg(w)$ and the longest directed path from $v_1$ to $\mathbf{G}_m$ has length $L$, while such a path does not exist from $v_2$ to $\mathbf{G}_m$. So, by the inductive hypothesis,
\begin{equation}
    \partial_w f_{v_1} = \sum_{t \in \mathbf{G}_m} \partial_w f_t \cdot \partial_t f_{v_1}.
\end{equation}
Since $\deg(v) < 2 \deg(w)$, and for all nodes $t \in \mathbf{G}_m$, we have $2 \deg(t) > 2m > \deg(v)$, so by applying Lemma \ref{partial-prodchild} twice, we have
\begin{equation}
    \partial_w f_v = f_{v_2} \cdot \partial_w f_{v_1} = \sum_{t \in \mathbf{G}_m} \partial_w f_t \cdot \left( f_{v_2} \cdot \partial_t f_{v_1} \right) = \sum_{t \in \mathbf{G}_m} \partial_w f_t \cdot \partial_t f_v.
\end{equation}

\begin{figure}
    \centering
    \includegraphics[scale=0.63]{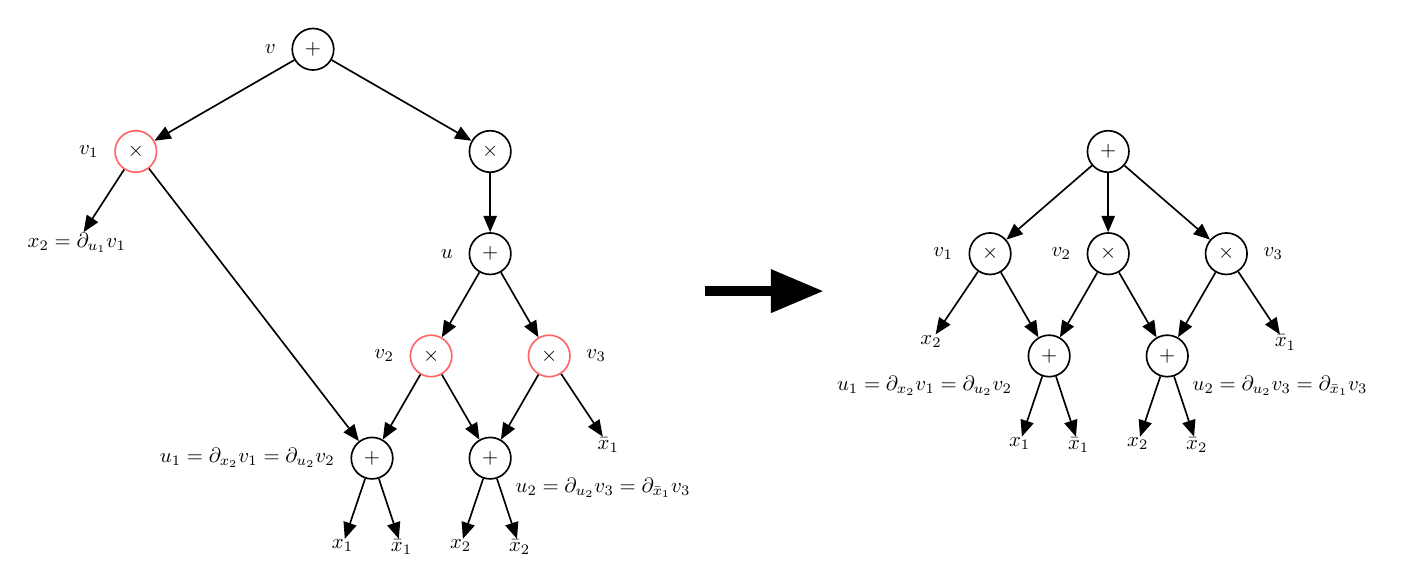}
    \caption{The process of converting an arbitrary DAG to a DAG with depth restriction. The red nodes are those in $\mathbf{G}_2$ and their relationships imply the computational procedure.}
    \label{fig:enter-label}
\end{figure}

\subsection{Proof of Lemma~\ref{smooth-homo-equivalence}}

Suppose the network is smooth. Recall that if the root of a probabilistic circuit contains $n$ variables, then the network computes a multi-linear polynomial of degree $n$. If the root is a sum node, then its children must be homogeneous with degree $n$. If the root is a product node, then its children must also be homogeneous, otherwise the product node will not be homogeneous.

Conversely, suppose such network is homogeneous. We prove by induction on the depth $d$ of the network. If $d=1$ and the root is a sum node, then the polynomial must be linear and therefore there can only be one variable $x$ and $\bar{x}$; as a result, this simple network is smooth. Now suppose the statement is true for any $d$, and we have a probabilistic circuit with depth $d+1$. If the root is a product node, we are done because if any sum node had two children with different scopes, the inductive hypothesis would be violated. If the root is a sum node, then every sum node other than the root cannot have two children with different scopes, because each sum node is in the induced sub-network rooted at a grandchild of the root of depth at most $d-1$ so the inductive hypothesis must hold. So, we only need to show $X_R = X_{R_1} = \cdots = X_{R_k}$, where $R_1, \ldots, R_k$ are children of $R$. Since the sub-networks rooted at $R_1, \cdots, R_k$ are decomposable and homogeneous, those sub-networks must be smooth by the inductive hypothesis. Hence, each $R_i$ computes a polynomial of degree $| X_{R_i} |$. If $| X_{R_i} | < n$, then the polynomial computed by $R$ is not homogeneous of degree $n$ and we obtain a contradiction. Therefore, each $R_i$ must contain all of $n$ variables to compute a polynomial of degree $n$ and as a result, we prove $X_R = X_{R_1} = \cdots = X_{R_k}$ and the smoothness of the entire network.

\subsection{Construction of $\Psi$}
\label{appendix:sec-construction-psi}

\subsubsection{Step one: computing $f_v$ for eligible nodes}
During iteration $i+1$, a polynomial $f_v$ is in consideration if and only if $2^i < \deg(v) \leq 2^{i+1}$. Naturally we shall apply Lemma \ref{rep-poly}, and therefore choosing an appropriate $m$ and the corresponding $\mathbf{G}_m$ is essential. Here we choose $m = 2^i$. Moreover, we define a set $T = \mathbf{G}_m \cap \Phi_v$ for each $v$ being considered; for every $t \in T$, we use $t_1$ and $t_2$ to denote its children. By Lemma \ref{rep-poly} and the definition that all nodes in $\mathbf{G}_m$ are product nodes, we have
\begin{equation} \label{rep-poly-specific}
    f_v = \sum_{t \in T} f_t \cdot \partial_t f_v = \sum_{t \in T} f_{t_1} \cdot f_{t_2} \cdot \partial_t f_v.
\end{equation}
Since $t \in T$, we must have $\max \left \{ \deg(t_1), \deg(t_2) \right \} \leq m = 2^i$, and therefore
\begin{equation}
    2^i = m < \deg(t) = \deg(t_1) + \deg(t_2) \leq 2m = 2^{i+1}.
\end{equation}
Therefore, $\deg(v) - \deg(t) < 2^{i+1} - 2^i = 2^i$ and $\deg(v) < 2^i + \deg(t) < 2 \deg(t)$. Hence, $f_{t_1}$, $f_{t_2}$ and $\partial_t f_v$ have all been computed already during earlier iterations. If $\deg(v) = \deg(t)$, then $t$ is a child of $v$ and $\partial_t f_v$ is the weight of the edge $v \rightarrow t$. Therefore, to compute such a $f_v$, we need to add $|T|$ product nodes and one sum node, whose children are those $|T|$ product nodes; apparently, the depth increases by two. If a subset of the three terms $\{ f_{t_1}, f_{t_2}, \partial_t f_v \}$ is a constant, then their product will be the weight of the edge connecting the product node $f_{t_1} \cdot f_{t_2}$ and the new sum node.

We now verify the validity of this updated circuit. Because $\Phi$ is decomposable and $t$ is a product node, we conclude $X_{t_1} \cap X_{t_2} = \emptyset$ and $X_t = X_{t_1} \sqcup X_{t_2}$. By Lemma \ref{variable}, we have $X_{t,v} = X_v \setminus X_t = X_v \setminus \left( X_{t_1} \sqcup X_{t_2} \right)$. Therefore, every summand in \Cref{rep-poly-specific} is a product node whose children are sum nodes with pairwise disjoint scopes, and thus, the updated circuit must be decomposable as well. Also, since $f_v$ is a homogeneous polynomial, so must be every summand for each $t$. As a result, the updated circuit is also homogeneous. Thanks to Lemma \ref{smooth-homo-equivalence}, the updated circuit is valid.

\subsubsection{Step two: computing $\partial_w f_u$ for eligible pairs of nodes}

As discussed earlier, during iteration $i+1$, a pair of nodes $u$ and $w$ are chosen if and only if $2^i < \deg(u) - \deg(w) \leq 2^{i+1}$ and $\deg(u) < 2 \deg(w)$. In this case, we fix $m = 2^i + \deg(w)$, and define $T = \mathbf{G}_m \cap \Phi_u$. Clearly, $\deg(w) < m < \deg(u) < 2 \deg(w)$, so by Lemma \ref{rep-pd}, we have
% \begin{equation}
$\partial_w f_u = \sum_{t \in T} \partial_w f_t \cdot \partial_t f_u$.
% \end{equation}
For each $t \in T$, by definition $t$ must be a product node, and since $t \in \Phi_u$, we have $\deg(w) < \deg(t) \leq \deg(u) < 2 \deg(w)$. Recall that the children of $t$ are denoted by $t_1$ and $t_2$, and we may assume without loss of generality that $\deg(t_1) \geq \deg(t_2)$. Hence, by Lemma \ref{partial-prodchild}, we have
\begin{equation} \label{rep-pd-specific}
    \partial_w f_u = \sum_{t \in T} f_{t_2} \cdot \partial_w f_{t_1} \cdot \partial_t f_u.
\end{equation}
Furthermore, we may safely assume $\deg(w) \leq \deg(t_1)$, otherwise $w$ is not a descendant of $t_1$ nor $t$ and therefore $\partial_w f_{t_1} = \partial_w f_t = 0$. Next, by analyzing their degrees and differences in degrees, we show that for each $t$, the terms $f_{t_2}$, $\partial_w f_{t_1}$, and $\partial_t f_u$ in that summand have all been computed by earlier iterations or the step one during this iteration $i+1$. 

\paragraph{Term $f_{t_2}$:} Since
\begin{equation}
    \deg(u) \leq \deg(w) + 2^{i+1} \leq 2^{i+1} + \deg(t_1) = 2^{i+1} + \deg(t) - \deg(t_2),
\end{equation}
we have
\begin{equation}
    \deg(t_2) \leq 2^{i+1} + \deg(t) - \deg(u) \leq 2^{i+1}.
\end{equation}
Hence, $f_{t_2}$ has already been computed during the first step of this current iteration or even earlier. 

\paragraph{Term $\partial_w f_{t_1}$:}
Recall that $\deg(t_1) \leq m = 2^i + \deg(w)$, so $\deg(t_1) - \deg(w) \leq 2^i$. Moreover, $\deg(t_1) \leq \deg(t) \leq \deg(u) < 2 \deg(w)$. Therefore, the pair $(t_1, w)$ satisfies both requirements to be computed during iteration $i$ or earlier.

\paragraph{Term $\partial_t f_u$:} Recall that $\deg(t) > m = 2^i + \deg(w)$, so
\begin{equation}
    \deg(u) - \deg(t) < \deg(u) - \deg(w) - 2^i \leq 2^{i+1} - 2^i = 2^i,
\end{equation}
where the second inequality follows from $\deg(u) - \deg(w) \leq 2^{i+1}$, the requirement of choosing $u$ and $w$ for iteration $i+1$. Finally,
\begin{equation}
    \deg(u) \leq 2^{i+1} + \deg(w) < 2 \cdot \underbrace{(2^i + \deg(w))}_{=m < \deg(t)} < 2 \deg(t).
\end{equation}
These two facts together ensure that $\partial_t f_u$ must have been computed during iteration $i$ or earlier.

Finally, we verify the validity of the updated circuit after this step. The new objects introduced in this step are only $|T|$ product nodes whose children are $f_{t_2}$, $\partial_w f_{t_1}$, and $\partial_t f_u$ for each $t \in T$, and one sum node whose children are those $|T|$ product nodes. It is easy to see that the sets $X_{t_2}$, $X_{t_1} \setminus X_w$ and $X_u \setminus X_t$ are pairwise disjoint since $X_w \subseteq X_{t_1}$ and $X_{t_1} \cap X_{t_2} = \emptyset$; therefore, the updated circuit is indeed decomposable. By Lemma \ref{variable}, all three terms in each summand are homogeneous, and therefore the new circuit is also homogeneous, and consequently, it is valid again by Lemma \ref{smooth-homo-equivalence}.

\section{Missing proofs in Section~\ref{sec:lowerbound}}
\label{appendix-proof-lower-bound}

\subsection{Proof of Proposition~\ref{dc}}
% \begin{proposition} \label{validity}
%     The tree PC $P^*$ is decomposable and smooth.
% \end{proposition}
Before writing the rigorous proof, we first fix some terminologies. In this proof, we refer the layer of all indicators constructed in step one as layer zero, and each set of nodes constructed in one of steps three, four, and five as one layer above. A negation indicator added in step six does not belong to any layer. Therefore, when we consider a layer of sum nodes, the negation indicator leaves whose parents are product nodes on the next layer are not in consideration. Step six augments the scope of every product node; for any product node $v$, we use $v'$ to denote the same vertex before being augmented during step six. To prove this statement, we first prove an equivalent condition for $P^*$ to be valid, and then show $P^*$ satisfies the equivalent property.
\begin{restatable}{lemma}{validity} \label{validity}
    Validity of $P^*$ is equivalent with the following statement:
    \begin{center}
        In $P^*$, every product node and its sibling have the same scope. If two product nodes are on the same layer but not siblings, then they have disjoint scopes.
    \end{center}
\end{restatable}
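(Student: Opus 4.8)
The goal is to establish that Proposition~\ref{dc} (decomposability and smoothness of $P^*$) is equivalent to the structural condition stated in Lemma~\ref{validity}: that siblings among product nodes share scope, and non-sibling product nodes on a common layer have disjoint scopes. The natural strategy is to prove the two implications separately, and to reduce the full decomposability-and-smoothness requirement to a statement purely about the product nodes, exploiting the rigid layered structure produced by Algorithm~\ref{algo-lower-bound}. The key observation is that in $P^*$, product and sum layers strictly alternate, so every child of a sum node is a product node (one layer down) and every child of a product node is a sum node, together with the negation-indicator leaves appended in step six. This alternation lets me express both smoothness (a condition on sum nodes) and decomposability (a condition on product nodes) entirely in terms of scopes of product nodes.

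\textbf{Forward direction.}
First I would assume $P^*$ is decomposable and smooth and derive the stated condition. Decomposability applied directly to a product node $v$ with its two product-children's parents gives that the two subtrees feeding $v$ have disjoint scopes; but the condition I want concerns \emph{siblings} (two product nodes sharing a sum parent) and \emph{non-siblings on the same layer}. For the sibling claim, I would invoke smoothness at the sum node that is their common parent: smoothness forces all children of a sum node to share a scope, so siblings have equal scope. For the disjointness of non-sibling product nodes on the same layer, I would trace paths up to their \emph{least common ancestor}, which must be a product node (since any common ancestor branching into disjoint subtrees at a product node yields disjoint scopes by repeated decomposability), and conclude their scopes are disjoint. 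Here one must be careful that step six augments scopes with negation indicators; I would use the notation $v'$ for the pre-augmentation node and check that the augmentation does not destroy disjointness, precisely because the negation indicators added to $v$ are drawn from the scope of $v$'s sibling (as described in the construction), keeping scopes on the same layer from overlapping.

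\textbf{Reverse direction.}
Conversely, assuming the structural condition, I would verify decomposability and smoothness. Smoothness requires every sum node's children to have equal scope: the children of a sum node are exactly a pair of sibling product nodes, so the condition's first clause gives this immediately. Decomposability requires every product node's children to have pairwise disjoint scopes; a product node's children are sum nodes one layer down, each of which sits atop a pair of product grandchildren, and I would argue that these grandchildren descend from distinct, non-sibling product nodes (or the disjoint halves of the recursion), so the same-layer disjointness clause propagates downward to give disjoint scopes at the sum-node children. The negation-indicator leaves must also be checked: each such leaf is a single indicator $\bar x_z$ with scope $\{X_z\}$, and I would confirm it does not collide with its product-node siblings' scopes, again using that step six inserts negation indicators drawn from the \emph{complementary} sibling's scope.

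\textbf{Main obstacle.}
The hardest part will be bookkeeping the effect of step six, the negation-indicator augmentation, on scopes. The clean alternating-layer picture holds for the $v'$ nodes (before augmentation), and the heart of the argument is showing that augmentation simultaneously (i) makes each product node's scope equal to its sibling's, thereby repairing smoothness at the parent sum node, and (ii) does not introduce overlaps between non-sibling product nodes on the same layer, thereby preserving decomposability. I expect to prove this by induction on the layer index, carrying the invariant that after augmentation each product node's scope is exactly the union of the scopes of the two product nodes beneath its parent sum node, and that this union is precisely the block of variables assigned to that subtree by the recursive structure $H_{u,v}$. Establishing this invariant cleanly — rather than arguing scope by scope — will be where the real care is needed.
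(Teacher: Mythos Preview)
Your reverse direction (the structural condition implies validity) matches the paper's argument closely, and your attention to the negation-indicator leaves is well placed. The forward direction, however, has a genuine gap: you assert that the least common ancestor of two non-sibling product nodes on the same layer ``must be a product node,'' but this is false in $P^*$. The root itself is a sum node, and whenever $v$ descends from one child $a_1$ of the root while $w$ descends from the other child $a_2$, their LCA is the root; more generally (e.g.\ for $k=3$, the nodes $L_{1,1}$ and $L_{1,5}$) the LCA can even be a non-root sum node. In such cases there is no product node at which the two root-to-node paths diverge, so decomposability cannot be invoked in the way you propose.

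The paper closes exactly this gap with an auxiliary \emph{degree-counting} argument that is absent from your plan. It first establishes (by induction on layers) the ``useful fact'' that every product node on layer $2j+1$ has degree $2^{2j+2}$, and then splits into two cases: either $v,w$ share a product ancestor, or their only common ancestor is the root. In the latter case the degree fact combined with smoothness forces the pre-augmentation scopes $\scope(a'_1)$ and $\scope(a'_2)$ of the root's children to \emph{partition} the full variable set; since $\scope(v)\subseteq\scope(a'_1)$ and $\scope(w)\subseteq\scope(a'_2)$, disjointness follows. The inductive invariant you sketch in your ``main obstacle'' paragraph could in principle substitute for this counting step, but then it must become the backbone of the forward direction rather than a supplementary check on step six; the LCA argument as stated will not carry the load.
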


\begin{proof}
    Suppose the statement holds, then for any sum node, its two children are product nodes and siblings, so they have the same scope; for any product node $v$, denote its children by $w$ and $w'$, and their children by $\{ w_1, w_2 \}$ and $\{ w'_1, w'_2 \}$, respectively. Clearly, $w_i$ and $w'_i$ are siblings and have the same scope for any $i \in \{ 1, 2 \}$, but if $j \neq i$, then $w_i$ and $w'_j$ have disjoint scopes. Therefore, $\scope(w) = \scope(w_1) = \scope(w_2)$ and $\scope(w') = \scope(w'_1) = \scope(w'_2)$ are disjoint, as desired.

    Conversely, suppose $P^*$ is decomposable and smooth. For any pair of product nodes which are siblings, they share a unique parent and thus have the same scope due to smoothness. Now suppose we have two product nodes $v$ and $w$, which are on the same layer but not siblings. We prove a useful fact: If two product nodes are on the same layer $2j+1$ for some $1 \leq j \leq k$, then $\deg(v) = \deg(w) = 2^{2j+2}$. When $j=1$, we know that initially every product node on layer one has two leaf children, so adding two negation indicators enforce that every product node on that layer has degree four. Assume the statement is true for all $j$, and we now consider those product nodes on layer $2(j+1)-1 = 2j+1$. By the inductive hypothesis, every product node on layer $2j-1$ has degree $2^{2j}$, and therefore every sum node on layer $2j$ also has degree $2^{2j}$. If $u$ is a product node on layer $2j+1$ with the sibling $u^*$, we have $\deg(u') = \deg((u^*)') = 2^{2j+1}$. Step six ensures that $\deg(u) = \deg(u^*) = \deg(u') + \deg((u^*)') = 2^{2j+2}$.

    If they share an ancestor that is a product node, then their scopes are disjoint due to decomposability. On the other hand, suppose their only common ancestor is the root, whose children are denoted by $a_1$ and $a_2$, then without loss of generality, we may assume that $v$ is a descendant of $a_1$ and $w$ is a descendant of $a_2$. Because $P^*$ is valid, it must be homogeneous and we have $\deg(a_1) = \deg(a_2)$. The fact we proved in the previous paragraph implies that $\deg(a'_1) = \deg(a'_2) = 2^{2k-3+2} = 2^{2k-1}$. In other words, step six increases the degree of $a'_1$ and $a'_2$ by $2^{2k-1}$ each. Because the whole tree $P^*$ is decomposable, the increase in $\deg(a'_1)$ is exactly $2^{2k-1} = | \scope(a'_2) |$, and vice versa. Due to smoothness, $\{ X_1, \cdots, X_{2^{2k}} \} = \scope(a'_1) \cup \scope(a'_2)$, and thus $\scope(a'_1) \cap \scope(a'_2) = \emptyset$. Finally, since $\scope(v) \subseteq \scope(a'_1)$ and $\scope(w) \subseteq \scope(a'_2)$, we must have $\scope(v) \cap \scope(w) = \emptyset$.
\end{proof}

Now we prove Proposition~\ref{dc} by showing that $P^*$ indeed satisfies the equivalent property.

\dc*

\begin{proof}
    Now we prove that $P^*$ satisfies the equivalent statement by induction on the index of the layer containing product nodes only. Using the index above, only the layers with odd indices from $\{ 2i-1 \}_{i=1}^{k}$ are concerned. For the base case, consider those $2^{2k-1}$ product nodes constructed in step two, denoted by $v_1, \cdots, v_{2^{2k-1}}$. For each $1 \leq j \leq 2^{2k-1}$, if $j$ is odd, then following steps two and six, the children of $v_j$ are $\left \{ x_{2j-1}, x_{2j}, \bar{x}_{2j+1}, \bar{x}_{2j+2} \right \}$. Its only sibling is $v_{j+1}$, whose children are $\left \{ x_{2j+2}, x_{2j+1}, \bar{x}_{2j}, \bar{x}_{2j-1} \right \}$. Thus, $\scope(v_j) = \scope(v_{j+1}) = \left \{ X_{2j-1}, X_{2j}, X_{2j+1}, X_{2j+2} \right \}$. The argument is identical if $j$ is even.

    On the other hand, suppose $1 \leq r < s \leq 2^{2k-1}$ and two product nodes $v_r$ and $v_s$ are not siblings, i.e. either $s-r>1$, or $s-r=1$ and $r$ is even.

    \paragraph{Case I: $s-r>1$.} In this case, the set $\scope(v_r)$ is $\left \{ X_{2r-1}, X_{2r}, X_{2r+1}, X_{2r+2} \right \}$ if $r$ is odd, $\left \{ X_{2r-3}, X_{2r-2}, X_{2r-1}, X_{2r} \right \}$ if it is even; similarly, the set $\scope(v_s)$ depends on the parity of $s$. If $s-r=2$, then they have an identical parity. If they are both odd, then $\scope(v_s) = \left \{ X_{2(r+2)-1}, X_{2(r+2)}, X_{2(r+2)+1}, X_{2(r+2)+2} \right \} = \left \{ X_{2r+3}, X_{2r+4}, X_{2r+5}, X_{2r+6} \right \}$, and is disjoint with $\scope(v_r)$. The argument is identical if they are both even. If $s-r>2$, then the largest index among the elements in $\scope(v_r)$ is $2r+2$, and the smallest index among the elements in $\scope(v_s)$ is $2s-3 \geq 2(r+3)-3 = 2r+3$; hence, $\scope(v_r) \cap \scope(v_s) = \emptyset$.

    \paragraph{Case II: $s-r=1$ and $r$ is even.} In this case, $\scope(v_r) = \left \{ X_{2r-3}, X_{2r-2}, X_{2r-1}, X_{2r} \right \}$ and $\scope(v_s) = \left \{ X_{2s-1}, X_{2s}, X_{2s+1}, X_{2s+2} \right \} = \left \{ X_{2r+1}, X_{2r+2}, X_{2r+3}, X_{2r+4} \right \}$ because $s=r+1$ is odd. Clearly, $\scope(v_r) \cap \scope(v_s) = \emptyset$.

    The argument above proves the base case. Suppose the statement holds until the layer $2i-1$ for some $i<k$, and we now consider layer $2(i+1)-1 = 2i+1$, which contains $2^{2k-2i-1}$ product nodes, denoted by $v_1, \cdots, v_{2^{2k-2i-1}}$. They must have non-leaf children, and we denote these nodes without their leaf nodes by $v'_1, \cdots, v'_{2^{2k-2i-1}}$. By construction, the layer $2i$ below contains $2^{2k-2i}$ sum nodes, denoted by $w_1, \cdots, w_{2^{2k-2i}}$; and the layer $2i-1$ contains $2^{2k-2i+1}$ product nodes, denoted by $z_1, \cdots, z_{2^{2k-2i+1}}$. For each $1 \leq r \leq 2^{2k-2i-1}$, the product node $v_r$ has children $w_{2r-1}$ and $w_{2r}$, and is their unique parent. Similarly, $w_{2 r}$ has children $z_{4r-1}$ and $z_{4r}$ and is their unique parent; $w_{2r-1}$ has children $z_{4r-3}$ and $z_{4r-2}$, and is their unique parent.

    We prove a simple fact that will simplify the induction step. We claim that, given two integers $r, s \in \{ 1, \cdots, 2^{2k-2i-1} \}$ and $r \neq s$, the scopes $\scope(v'_r)$ and $\scope (v'_s)$ are disjoint. Without loss of generality, we assume $r < s$. By construction, $\child(v'_r) = \{ w_{2r-1}, w_{2r} \}$ and $\child(v'_s) = \{ w_{2s-1}, w_{2s} \}$; furthermore, $\child(w_{2r-1}) = \{ z_{4r-3}, z_{4r-2} \}$, $\child(w_{2r}) = \{ z_{4r-1}, z_{4r} \}$, $\child(w_{2s-1}) = \{ z_{4s-3}, z_{4s-2} \}$, $\child(w_{2s}) = \{ z_{4s-1}, z_{4s} \}$. Observe that, if a pair of product nodes belong to one of the four sets above, then they are siblings and have the same scope; if they belong to distinct sets, then they are not siblings and have disjoint scopes. We know that the scope of a node is the union of the scopes of their children, so the four scopes $\scope(w_{2r-1})$, $\scope(w_{2r})$, $\scope(w_{2s-1})$, and $\scope(w_{2s})$ are pairwise disjoint. As a result, the scopes $\scope(v'_r) = \scope(w_{2r-1}) \sqcup \scope(w_{2r})$ and $\scope(v'_s) = \scope(w_{2s-1}) \sqcup \scope(w_{2s})$ are disjoint.

    Now we prove the induction step. In the first case, suppose $v_r$ and $v_{r+1}$ are sibling, i.e. $r$ is odd so $v_{r+1}$ is the only sibling of $v_r$. We have shown that $\scope(v'_r) \cap \scope(v'_{r+1}) = \emptyset$. However, step six enforces that $\scope(v_r) = \scope(v'_r) \sqcup \scope(v'_{r+1}) = \scope(v_{r+1})$, as desired.

    Next, suppose $1 \leq r < s \leq 2^{2k-2i-1}$ and $v_r$ and $v_s$ are not siblings. Denote the siblings of $v_r$ and $v_s$ by $v_{r'}$ and $v_{s'}$, respectively; by definition, $r' \in \{ r-1, r+1 \}$ and $s' \in \{ s-1, s+1 \}$, depending on the parity of $r$ and $s$. Clearly, the four nodes $v_r, v_{r'}, v_s, v_{s'}$ are distinct, and consequently the four sets $\scope(v_r)$, $\scope(v_{r'})$, $\scope(v_s)$, and $\scope(v_{s'})$ are pairwise disjoint. Step six enforces that $\scope(v_r) = \scope(v'_r) \sqcup \scope(v'_{r'})$ and $\scope(v_s) = \scope(v'_s) \sqcup \scope(v'_{s'})$, which are disjoint as desired.
\end{proof}

\subsection{Proof of Proposition~\ref{reduction}}

We first realize the polynomial $P$ returned by Algorithm~\ref{algo-lower-bound} without adding those leaves representing negation variables. Recall that layer one contains $2^{2k-1}$ product nodes, and before adding negation variables, the bottom layer (layer zero) contains $2^{2k}$ leaves. If for every odd integer $i \in \{ 1, 3, 5, \ldots, 2^{2k}-1 \}$, we denote the monomial by $f_{i,i+1} = x_i x_{i+1}$, then without adding negation variables, the polynomial can be constructed by the following recursion with $2k+1$ steps:
\begin{itemize}
    \item Construct $2^{2k-1}$ monomials $x_1 x_2, \ldots, x_{2^{2k}-1} x_{2^{2k}}$.
    \item Sum up $2^{2k-2}$ pairs of consecutive monomials, and return $2^{2k-2}$ polynomials with two monomials $x_1 x_2 + x_3 x_4, \ldots, x_{2^{2k}-3} x_{2^{2k}-2} + x_{2^{2k}-1} x_{2^{2k}}$.
    \item Multiply $2^{2k-3}$ pairs of consecutive polynomials, and return $2^{2k-3}$ polynomials.
    \item Repeat the operation until only one polynomial is returned.
\end{itemize}
Observe that this polynomial is exactly $H^{(k,2)}$ defined in \Cref{sec:lowerbound} with an alternative set of indices for the variables ($[2]^k \times [2]^k$ versus $[2^{2k}]$). To prove Proposition~\ref{reduction}, it is sufficient to show that for every minimum tree-structured probabilistic circuit $\Pi$ of depth $o(k)$ that computes $P$, the removal of those leaves representing negation variables returns an arithmetic formula that has the same depth and computes $H^{(k,2)}$. To show this, we need the following lemma.

\begin{restatable}{lemma}{spi}
    \label{structure-of-Pi}
    In any tree-structured probabilistic circuit $\Pi$ that computes $P$, no sum node has a negation indicator as a child, and no product node has only negation indicators as its children.
\end{restatable}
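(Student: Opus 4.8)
The plan is to exploit two rigid features of the target polynomial $P$. First, $P$ is homogeneous of degree $n$ (by \Cref{smooth-homo-equivalence}, since $\Pi$ is smooth and computes a degree-$n$ polynomial on $n$ variables). Second, every monomial of $P$ contains exactly $2^k$ positive indicators: its positive part is a monomial of the homogeneous degree-$2^k$ polynomial $H^{(k,2)}$ described just before the lemma, while the remaining $n-2^k$ factors are negation indicators forced by smoothness and decomposability. Because $\Pi$ is a probabilistic circuit, all edge weights are nonnegative, so $P$ and every auxiliary polynomial below have nonnegative coefficients and monomials never cancel. I will also assume without loss of generality that the minimum tree $\Pi$ is \emph{reduced}: sum and product nodes alternate, and no sum node computes a single monomial, since any such violation can be removed without changing $P$ or increasing $|\Pi|$.

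The engine of the proof is the partial-derivative decomposition $P = f_v \cdot \partial_v P + R_v$ at an arbitrary node $v$, where $\partial_v P$ is the partial derivative of the root polynomial with respect to $v$ and $R_v$ is $P$ evaluated with $f_v$ set to $0$; this identity is valid because decomposability makes $P$ linear in $f_v$, and by \Cref{variable} the polynomial $\partial_v P$ is homogeneous and free of every variable in $\scope(v)$. From this I would extract the key structural claim: for every contributing node $v$, all monomials of $f_v$ carry the same number of positive indicators. Indeed, $f_v$ uses only variables inside $\scope(v)$ while $\partial_v P$ uses only variables outside it, so each product of a monomial of $f_v$ with a monomial of $\partial_v P$ is a distinct monomial that, by nonnegativity, survives in $P$; fixing one monomial of $\partial_v P$ and letting the monomials of $f_v$ vary, the positive-support sizes add to the constant $2^k$, forcing the positive-support size of $f_v$ to be constant.

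With this claim both parts follow quickly. For a sum node $s$ having a negation child $\bar{x}_j$, smoothness forces $\scope(s)=\{X_j\}$, so $f_s=\alpha x_j + \beta\bar{x}_j$ with $\beta>0$; if $\alpha>0$ then $f_s$ has monomials of positive-support size $1$ and $0$, contradicting the claim, and if $\alpha=0$ then $f_s$ is a single monomial, contradicting reducedness. For a product node $p$ whose children are all negation indicators, decomposability gives $f_p=\prod_{i\in T}\bar{x}_i$ with $T=\scope(p)$; such $p$ cannot be the root (else $P$ would be purely negative), so its parent is a sum node $q$ with $\scope(q)=T$, and $f_q$ then contains the positive-support-$0$ monomial $\prod_{i\in T}\bar{x}_i$. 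Applying the claim to $q$ forces every monomial of $f_q$ to have positive-support $0$, and since $f_q$ is multilinear and homogeneous of degree $|T|$ this leaves only $f_q=\lambda\prod_{i\in T}\bar{x}_i$, again a single-monomial sum node, contradicting reducedness.

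The main obstacle is isolating and proving the support-size claim cleanly, since it relies on three ingredients working together: nonnegativity of weights (so no cancellation can hide the offending monomials), the variable-disjointness of $f_v$ and $\partial_v P$ supplied by \Cref{variable}, and the global homogeneity bookkeeping that pins every monomial of $P$ to exactly $2^k$ positive indicators. The remaining delicate point is the two degenerate single-monomial nodes, where I must appeal to minimality and reducedness of $\Pi$ rather than to a property of $P$ itself; I would make that reduction explicit at the outset so that the residual cases in both parts collapse into the same contradiction.
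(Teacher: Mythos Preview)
Your argument is correct and takes a genuinely different route from the paper. The paper's proof substitutes every negation indicator by the constant $1$, observes that the resulting monotone formula computes the homogeneous polynomial $H^{(k,2)}$, and then invokes a separate lemma (Lemma~\ref{monotone-formula-homogeneous}: a monotone formula computing a homogeneous polynomial is homogeneous at every node). A sum node with a negation child would, after substitution, compute $\alpha x_j + \beta$; a product node with only negation children would become the constant $1$ while a sibling of the same scope computes something of positive degree --- in either case some node fails to be homogeneous, contradicting the lemma. Your proof instead establishes directly, via the decomposition $P = f_v \cdot \partial_v P + R_v$ together with nonnegativity and the variable-disjointness supplied by \Cref{variable}, that at every node all monomials of $f_v$ carry the same number of positive indicators --- which is precisely the statement that the substituted formula is homogeneous at every node. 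The paper's substitution trick is slicker and packages the key insight into one line plus a reusable lemma; your partial-derivative argument is more self-contained and ties in nicely with the machinery already built for the upper bound, at the cost of more bookkeeping. One caveat worth flagging: the lemma is stated for \emph{any} tree $\Pi$, yet both your proof (explicitly, via reducedness) and the paper's (implicitly, via its claim that a sum node with a leaf child has exactly the two children $x_i,\bar{x}_i$, and via its ``without loss of generality'' on siblings) really handle only reduced trees. This is harmless for the application, since Proposition~\ref{reduction} only needs the lemma for a minimum tree.
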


The proof of Lemma~\ref{structure-of-Pi} relies on the following lemma on monotone arithmetic formulas.

\begin{restatable}{lemma}{mfh}
    \label{monotone-formula-homogeneous}
    A monotone arithmetic formula computing a homogeneous polynomial must be homogeneous.
\end{restatable}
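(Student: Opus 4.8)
The plan is to prove Lemma~\ref{monotone-formula-homogeneous} by induction on the structure of the monotone formula, leveraging the defining property of monotonicity: every parse tree contributes a monomial whose coefficient in the network polynomial is nonnegative. The key observation is that in a monotone formula there can be no cancellation among monomials produced by distinct parse trees, because all contributions have the same sign. This non-cancellation property is exactly what is needed to rule out the one way a non-homogeneous formula could compute a homogeneous polynomial, namely by having stray monomials of the ``wrong'' degree cancel out.

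First I would set up the induction on the depth (or size) of the formula $\Phi$, where $f$ denotes its network polynomial and $f$ is assumed homogeneous of some degree $d$. The base case of a single leaf is trivial, since a leaf computes a single indicator and is homogeneous of degree one (or a constant of degree zero). For the inductive step I would split on whether the root is a sum node or a product node. If the root is a product node with children $v_1, \ldots, v_k$, then $f = \prod_i f_{v_i}$, and since the parse trees of $\Phi$ are formed by combining one parse tree from each child, monotonicity of $\Phi$ restricts to monotonicity of each sub-formula $\Phi_{v_i}$; moreover any monomial of $f$ factors as a product of monomials, one from each $f_{v_i}$, so homogeneity of the product forces each factor to be homogeneous. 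Applying the inductive hypothesis to each $\Phi_{v_i}$ yields that each child computes a homogeneous polynomial, which makes $\Phi$ homogeneous.

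The genuinely delicate case is the sum node, and this is where I expect the main obstacle to lie. If the root is a sum node with children $v_1, \ldots, v_k$ and weights $a_i$, then $f = \sum_i a_i f_{v_i}$. A priori the children could compute polynomials of differing degrees, with the off-degree monomials cancelling in the sum to leave a homogeneous $f$. The crux is to use monotonicity to forbid this. Each parse tree of $\Phi$ passes through exactly one child $v_i$, so the monomials of $a_i f_{v_i}$ appear among the parse-tree monomials of $\Phi$; by monotonicity each such monomial has a nonnegative coefficient in $f$. Because all coefficients share the same sign, no monomial of any $a_i f_{v_i}$ can be cancelled by a monomial of another $a_j f_{v_j}$. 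Hence every monomial appearing in any $a_i f_{v_i}$ survives into $f$, and since $f$ is homogeneous of degree $d$, every such monomial must have degree $d$. This shows each $f_{v_i}$ is homogeneous of degree $d$; combined with the observation that monotonicity of $\Phi$ descends to each $\Phi_{v_i}$ (a parse tree of a child is a parse tree of $\Phi$, so its monomial still has a nonnegative coefficient), the inductive hypothesis applies to conclude each child is homogeneous, and therefore $\Phi$ is homogeneous.

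The subtle point I would be most careful about is formalizing ``no cancellation'': one must argue that the coefficient of a fixed monomial $\mu$ in $f$ equals the sum of its coefficients across the $a_i f_{v_i}$, and that each of these per-child contributions is itself a sum of nonnegative parse-tree coefficients, so they cannot destructively combine. I would make this precise by noting that the set of parse trees of $\Phi$ partitions according to which child they descend into, and the monomial-with-multiplicity decomposition of $f$ is the disjoint union over these blocks; monotonicity guarantees each block contributes nonnegatively to every monomial, so a monomial of nonzero total coefficient in some block cannot vanish in $f$. This rules out off-degree survivors and completes the argument.
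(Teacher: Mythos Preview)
Your proposal is correct and follows essentially the same inductive case-split as the paper's proof. The only difference is emphasis: the paper dismisses the sum-node case with a single ``clearly every child of its must be homogeneous'' and instead spells out the product-node case (picking two explicit monomials $f_{1,1}\prod_{j\ge 2}f_{j,1}$ and $f_{1,2}\prod_{j\ge 2}f_{j,1}$ of different degrees), whereas you do the reverse, carefully unpacking the no-cancellation argument for the sum case and only sketching the product case. Both arguments rest on the same idea---monotonicity forbids destructive interference among parse-tree monomials---so the proofs are the same in substance.
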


\begin{proof}
    If a formula is rooted at a sum node, then clearly every child of its must be homogeneous. If the root is a product node with $k$ children, then denote the polynomials computed by them as $f_1, \cdots, f_k$ and write $f = \prod_{i=1}^k f_i$. Furthermore, for each $i \in [k]$, further assume $f_i$ contains $q_i$ monomials, and write it as
    \begin{equation}
        f_i = f_{i,1} + \cdots + f_{i, q_i}.
    \end{equation}
    Without loss of generality, assume $f_1$ is not homogeneous and $\deg(f_{1,1}) \neq \deg(f_{1,2})$. Then at least two of the monomials of $f$, namely $f_{1,1} \times \prod_{j=2}^k f_{j,1}$ and $f_{1,2} \times \prod_{j=2}^k f_{j,1}$, must have distinct degrees and therefore destroy the homogeneity of the root.
\end{proof}

\begin{proof} [Proof of Lemma~\ref{structure-of-Pi}]
    First, observe that in a PC, if a sum node has a leaf as a child, then due to smoothness, it can only have two children, which are negation and non-negation indicators for a same variable.

    Suppose $\Pi$ does have a sum node $u$ that has a negation indicator $\bar{x}_i$ as a child. Observe that, if we replace all negation indicators with the constant one, then the resulting tree is still monotone and computes $F$, which is a homogeneous polynomial. The replacement will cause that sum node to compute exactly $x_i + 1$, which is not homogeneous.

    Similarly, if a product node $v$ has only negation indicators as its children, then the replacements above force $v$ to compute one. Smoothness enforces that its siblings have the same scope as $v$ does, and without loss of generality we may assume none of its siblings computes the same polynomial as $v$ does, so their degrees are higher than one. As a result, the replacements of all negation indicators to one will force the parent of $v$ to compute a non-homogeneous polynomial, which contradicts Lemma~\ref{monotone-formula-homogeneous}.
\end{proof}

Now Proposition~\ref{reduction} can be confirmed, because the removal strategy will indeed produce a tree that computes $H^{(k,2)}$, and no internal nodes will be affected, because Lemma~\ref{structure-of-Pi} ensures that, no internal node in $\Pi'$ computes a constant one and can be removed.% Clearly, $\Pi'$ has a strictly smaller size than $\Pi$, and combining with \Cref{2023}, we have the following inequality that concludes Theorem~\ref{lower-bound}.

\newpage

\section{Pseudocodes}

\begin{algorithm}[hbt!] \label{algo-construct-G}
    \caption{Construction of $\mathbf{G}$}
    \KwData{a binary DAG-structured PC $\Phi$ with $V$ nodes and $n$ variables}
    \KwResult{For each $i=1, \cdots, \log n$, a set of nodes $\mathbf{G}_{2^i}$ and for selected $w$, a set of nodes $\mathbf{G}_{2^i, w}$.}
    $T \leftarrow \emptyset$; $P_i \leftarrow \emptyset, Q_i \leftarrow \emptyset$, $\mathbf{G}_{2^i} \leftarrow \emptyset$, $\mathbf{G}_{2^i, w} \leftarrow \emptyset$ for $i \in \{ 1, \cdots, \log n \}$ and $w \in \Phi$; $\Phi_v \leftarrow \emptyset$ for $v \in \Phi$.
    %\Procedure{Reconstructing a DAG with bounded depth}{}
    Scan all nodes from the bottom and calculate the degree of each $f_v$.

    During the scanning, extract all weights of edges from a sum node to a product node.

    \For{nodes $v$ in $\Phi$}{
        $\Phi_u \leftarrow \Phi_u \cup \{ v \}$ if $u$ is a parent of $v$.
        \For{$i=1$ \textbf{to} $\log n$}{
            \If{$2^i < \deg(v) \leq 2^{i+1}$}{
                $P_i \leftarrow P_i \cup \{v\}$.
            }
            \If{$v$ is a product node \textbf{and} $\deg(v) > 2^i$ \textbf{and} $\deg(v_1) \leq 2^i$ \textbf{and} $\deg(v_2) \leq 2^i$}{
                $\mathbf{G}_{2^i} \leftarrow \mathbf{G}_{2^i} \cup \{v\}$.
            }
            \For{other nodes $w$ in $\Phi$}{
                \If{$2^i < \deg(v) - \deg(w) \leq 2^{i+1}$ and $\deg(v) < 2 \deg (w)$}{
                    $Q_i \leftarrow Q_i \cup \{ (v,w) \}$
                }
                \If{$v$ is a product node \textbf{and} $\deg(v) > 2^i + \deg(w)$ \textbf{and} $\deg(v_1) \leq 2^i + \deg(w)$ \textbf{and} $\deg(v_2) \leq 2^i + \deg(w)$}{
                    $\mathbf{G}_{2^i, w} \leftarrow \mathbf{G}_{2^i, w} \cup \{v\}$.
                }
            }
        }
    }
\end{algorithm}

\begin{algorithm}[hbt!] \label{algo-entire-upper-bound}
    \caption{Construction of the tree}
    \KwData{a binary DAG-structured PC $\Phi$ with $V$ nodes and $n$ variables}
    \KwResult{a tree-structurd PC with size $2^{O(\log^2 n)}$}
    $T \leftarrow \emptyset$; $P_i \leftarrow \emptyset, Q_i \leftarrow \emptyset$, $\mathbf{G}_{2^i} \leftarrow \emptyset$, $\mathbf{G}_{2^i, w} \leftarrow \emptyset$ for $i \in \{ 1, \cdots, \log n \}$ and $w \in \Phi$; $\Phi_v \leftarrow \emptyset$ for $v \in \Phi$; $m \in \mathbb{N}$ is not defined yet

    Operate Algorithm~\ref{algo-construct-G} and return $\mathbf{G}_{2^i}$ and $\mathbf{G}_{2^i, w}$ for all $i \in \{ 1, \cdots, \log n \}$ and those $w \in \Phi$ that were selected for computing partial derivatives.

    \For{$i=1$ \textbf{to} $\log n$}{
        $m \leftarrow 2^i$.
        \For{$v \in P_i$}{
            $T \leftarrow \mathbf{G}_{2^i} \cap \Phi_v$;

            \For{$t \in T$}{
                Create a product node $\otimes_t$ computing $f_{t_1} \cdot f_{t_2} \cdot \partial_t f_v$.
            }
            Create a sum node $\oplus_v$ that sums over all $\otimes_t$; for $t \in T$ such that $\partial_t f_v$ is a non-zero-or-one constant, the edge $\oplus_v \rightarrow \otimes_t$ has weight $\partial_t f_v$.
        }
        \For{$(v,w) \in Q_i$}{
            $m \leftarrow 2^i + \deg(w)$, $T \leftarrow \mathbf{G}_{2^i, w} \cap \Phi_v$;

            \For{$t \in T$}{
                Create a product node $\otimes_t$ computing $f_{t_2} \cdot \partial_w f_{t_1} \cdot \partial_t f_v$.
            }
            Create a sum node $\oplus_{(v,w)}$ that sums over all $\otimes_t$; for $t \in T$ such that $\otimes_t$ contains a constant multiplier, the edge $\oplus_{(v,w)} \rightarrow \otimes_t$ has weight of that constant.
        }
    }
    Apply the naive duplication to convert the DAG into a tree.
    Apply Algorithm 2 in \cite{zhao2015relationship} to normalize the tree.
\end{algorithm}

\end{document}